\pdfoutput=1

\documentclass{article} %
\usepackage{iclr2027_conference,times}

\usepackage{amsmath,amsfonts,bm}

\def\eqref#1{equation~\ref{#1}}

\def\1{\bm{1}}

\def\vzero{{\bm{0}}}

\def\vmu{{\bm{\mu}}}
\def\vtheta{{\bm{\theta}}}

\def\vu{{\bm{u}}}

\def\vx{{\bm{x}}}

\def\vz{{\bm{z}}}

\def\mI{{\bm{I}}}

\def\mS{{\bm{S}}}

\def\mSigma{{\bm{\Sigma}}}

\DeclareMathAlphabet{\mathsfit}{\encodingdefault}{\sfdefault}{m}{sl}
\SetMathAlphabet{\mathsfit}{bold}{\encodingdefault}{\sfdefault}{bx}{n}

\def\gB{{\mathcal{B}}}
\def\gC{{\mathcal{C}}}
\def\gD{{\mathcal{D}}}

\def\gF{{\mathcal{F}}}

\def\gK{{\mathcal{K}}}
\def\gL{{\mathcal{L}}}

\def\gN{{\mathcal{N}}}

\def\gY{{\mathcal{Y}}}

\def\sQ{{\mathbb{Q}}}

\newcommand{\E}{\mathbb{E}}

\newcommand{\R}{\mathbb{R}}

\newcommand{\Var}{\mathrm{Var}}

\newcommand{\Cov}{\mathrm{Cov}}

\usepackage{amssymb,amsthm}
\usepackage{thm-restate}
\usepackage{hyperref}
\usepackage{url}
\usepackage[inline]{enumitem}
\hypersetup{hidelinks}
\usepackage{graphicx}
\usepackage{float}
\usepackage{booktabs}
\usepackage{xcolor}

\newcommand{\parbold}[1]{\textbf{#1}}

\title{What ``Form'' of Uncertainty Does Conformal Prediction Quantify?}

\author{Soroush H.~Zargarbashi\textsuperscript{1}, Mohammad Sadegh Akhondzadeh\textsuperscript{2}, \\
\bfseries Rebekka Burkholz\textsuperscript{1}, Aleksandar Bojchevski\textsuperscript{2} \\[2pt]
{\normalfont\textsuperscript{1}CISPA Helmholtz Center for Information Security \quad \textsuperscript{2}University of Cologne}
}

\newcommand{\xtest}{\vx_{n+1}}
\newcommand{\dcal}{\gD_{n}}

\newcommand{\daugmented}{\gD_{n+1}^{(c)}}

\providecommand{\vpsi}{{\bm{\psi}}}
\providecommand{\mPsi}{{\bm{\Psi}}}

\newtheorem{theorem}{Theorem}[section]
\newtheorem{lemma}[theorem]{Lemma}

\theoremstyle{definition}

\theoremstyle{remark}

\iclrfinalcopy
\begin{document}

\maketitle
\lhead{Preprint. Under review.}

\begin{abstract}
   Conformal prediction (CP) is a post-hoc calibration method that returns prediction sets with distribution-free coverage guarantees, and the size of the output set is often interpreted as a notion of uncertainty. We show that the difference in set size between full and split CP is directly connected to Bayesian posterior variance, which is the epistemic uncertainty of the model. Essentially, this difference in set size is an aggregation of two opposite effects: (1) better model accuracy and therefore smaller set size achieved by training over extra calibration points, and (2) the ``formability'' of the model at test point towards every possible label. The latter effect vanishes with stronger models or more calibration data (as epistemic uncertainty is known for that behavior). It turns out that the model’s formability itself (without the quantization effect of prediction sets, and through estimating the posterior variance) is a useful tool to quantify the epistemic uncertainty. In our empirical evaluation it performs on par with the state of the art in many setups.
\end{abstract}

\begin{figure}[b]
    \centering
    \includegraphics{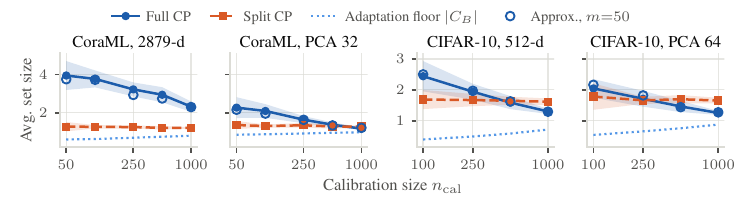}
    \caption{Comparison of full (blue, solid) vs.\ split (orange, dashed) CP, against the adaptation floor $|\gC_B|$ (see {\autoref{sec:formability}}), over $n_{\mathrm{cal}}$ and feature dimension $K$, and for [Left] CoraML (transductive node-classification, GCN), and [Right] CIFAR-10. The approximation is with $m=50$ samples (see \autoref{sec:experiments}).}
    \label{fig:full-vs-split-calib-pca}
\end{figure}

\section{Introduction}
A model's prediction can be uncertain, either since the data is hard to predict (aleatoric), or because the model is not confident in its prediction due to lack of information (epistemic). Conformal prediction (CP) is a widely used approach in risk-critical applications where quantifying total predictive uncertainty is crucial. Given any model $f(\cdot;\vtheta)$, CP returns a prediction set $\gC(\xtest)$ for a test point $\xtest$ that is guaranteed to include the true label with user-specified probability $1-\alpha$. This guarantee is distribution-free; namely, it holds regardless of the underlying data distribution, or the model family \citep{angelopoulos2024theoretical}. CP requires an exchangeable set of calibration points, and can be applied in two ways: \begin{enumerate*}[label=(\roman*)]
    \item fine-tuning the model jointly on the calibration set and the test point for every possible pseudo-label (full CP), or
    \item only evaluating the model on the calibration points (split CP).
\end{enumerate*} 
Full CP is known to be more data-efficient, which is particularly appealing in cases with label scarcity \citep{lei2018distribution} (e.g. semi-supervised graph node classification). The argument is that in these settings the data is not divided into separate training and calibration sets, which potentially results in better model performance, and smaller sets. 

In contrast, \citet{zong2025comparative} show that full CP doesn't always outperform split CP without specifying a definitive reason, leaving this question open: \emph{when and by how much full CP can be more efficient, and what can be concluded from their performance gap?} It turns out (as shown in \autoref{fig:full-vs-split-calib-pca}, and \autoref{fig:train-calib-grid}), the performance gap (set size difference) {reduces with} a larger dataset (calibration, or training), {and a lower feature dimension}. Intuitively, while in most cases a larger dataset (combination of training and calibration) improves the model's performance, fine-tuning on the test point with a pseudo-label deforms the model towards that label. The strength of this deformation is governed by the calibration density, and the confidence from {pretraining} both as resisting factors. Strong deformation includes wrong pseudo-labels in the set, resulting in worse overall performance.

In particular, we define \emph{formability} at a test point, as the difference between the predictive probability {of any label} between two models: one {fine-tuned on a} calibration set, and the other on the union of the calibration set and the pseudo-labeled test point (on that label). Under Bayesian training, where the epistemic uncertainty is quantified as the disagreement among the plausible (under the posterior) parameters, we show that formability is directly related to the epistemic uncertainty of the predictive distribution at the test point; namely we bound the posterior variance, by a function of formability. This also shows the reason why by increasing calibration set size the difference between full and split CP disappears: the formability (epistemic uncertainty) is known to reduce with more data. 

\section{Background} 
\label{sec:background}
Let $f(\cdot;\vtheta)$ be a pretrained model from the family $\gF$ specified by the parameters $\vtheta$. We use $p_\vtheta(y\mid \vx)$ for the predictive label distribution of $f$ at the input $\vx$. Let $\dcal=\{(\vx_i,y_i)\}_{i=1}^n$ be a holdout calibration set jointly exchangeable with the test point $(\vx_{n+1},y_{n+1})$.

\parbold{Split CP \citep{vovk2005algorithmic} (SCP).} With $r^\vtheta_i=p_\vtheta(y_i\mid \vx_i)$, define the threshold 
\begin{align}
    q_\vtheta = Q_{\alpha,n}(r_{1:n})
    :=\sQ_{\alpha}\left(
    \sum_{i=1}^{n}\frac{1}{n+1}\delta_{r^\vtheta_i}
    +\frac{1}{n+1}\delta_{\infty}
    \right).
    \label{eq:lower-conformal-quantile}
\end{align}
 and the prediction set $\gC_\vtheta^{\mathrm S}(\vx_{n+1})=\{c\in\gY:p_\vtheta(c\mid \vx_{n+1})\geq q_\vtheta\}$. This set has a distribution-free coverage guarantee $\Pr[y_{n+1}\in\gC_\vtheta^{\mathrm S}(\vx_{n+1})]\geq 1-\alpha$. Other scores are discussed in \autoref{app:train-vs-calib-aps}. 

\parbold{Full CP.}
{We define} the augmented dataset $\daugmented=\dcal\cup\{(\vx_{n+1},c)\}$,  for a pseudo-label $c$. Full conformal prediction \citep{vovk2005algorithmic} (FCP) defines the prediction set 
$
    \gC_\vtheta^{\mathrm F}(\vx_{n+1})=\{c\in\gY:p_\vtheta^{(c)}(c\mid \vx_{n+1})\geq Q_{\alpha,n}(r_{1:n}^{(c)})\}$ 
for $r_i^{(c)}=p_\vtheta^{(c)}(y_i\mid \vx_i)$, where $p_\vtheta^{(c)}$ is the predictive distribution of the model after fine-tuning on $\daugmented$\footnote{Full CP can be applied by both training an initial model, or fine-tuning a pre-trained model.}. This prediction set also comes with the same distribution-free guarantee. In both cases, the guarantee comes from the fact that the true scores $p_\cdot(x_i, y_i)$ are exchangeable for $i \in [n+1]$. The difference is that FCP fine-tunes a single model per test point, per label. The potential use case of this computationally expensive approach is when the labeled dataset is small, and the model can benefit from training on the entire dataset (training and calibration).

\parbold{Bayesian deep learning.} Instead of a point estimate $\vtheta$ (deterministic value for parameters), Bayesian deep learning \citep{neal1996bayesian,wilson2020bayesian} {defines a} distribution $\pi(\vtheta)$ over the model parameters. 
As a result, the predictive distribution is also defined as $p_\pi(y\mid\vx):=\E_{\vtheta\sim\pi}[p_\vtheta(y\mid \vx)]$ instead of a single $p(y\mid \vx;\vtheta)$.
Training a Bayesian model is equivalent to finding the posterior parameter distribution $\pi(\vtheta\mid\gD)$ given the dataset $\gD$ as evidence, and a prior $\pi_0(\vtheta)$ over the parameters (e.g. $\gN(\boldsymbol{0}, \sigma \mI)$ for initialization).
Under this definition, the epistemic uncertainty is quantified as the disagreement among parameters that are plausible under the posterior -- measured as the variance $\Var_{\vtheta\sim\pi}[p_\vtheta(\cdot\mid \vx)]$ \citep{pmlr-v244-sale24a}. Notably, as \citet{jimenez2025machine} argue, one cannot quantify the aleatoric uncertainty unless the epistemic uncertainty is sufficiently low. 

\parbold{Basin decomposition.} The support of a probability distribution (e.g. $\pi(\vtheta)$) can be decomposed into disjoint basins of attraction $\gB_1,\ldots,\gB_K$  with $\cup_{k=1}^K\gB_k=\mathrm{Support}(\pi)$, and $\gB_k\cap\gB_{k'}=\emptyset$ for $k\neq k'$. Each basin is a connected region of the parameter space that contains a local optimum $\hat\vtheta_k$ of the loss function. In \autoref{sec:map-refits}, we discuss that the posterior predictive distribution can be decomposed into a mixture of distributions over each basin{. A} better approximation of the variance considers basins other than only the one containing the MAP as well.

\begin{figure}[t]
  \centering
  \includegraphics{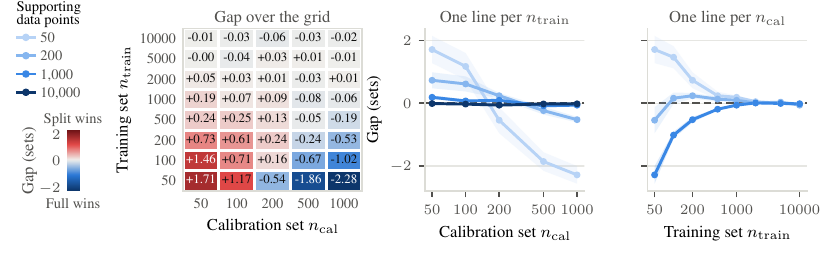}
  \caption{
    Performance gap between $|C_{\mathrm{full}}|$ and $|C_{\mathrm{split}}|$ over a grid of training and calibration set sizes. This is an {extension} of the result shown in \autoref{fig:full-vs-split-calib-pca}. 
    }
  \label{fig:train-calib-grid}
\end{figure}

\section{Formability}
\label{sec:formability}

\subsection{Motivating Experiment}
\label{sec:motivating-experiment}

First we evaluate the {claim that} ``in label scarcity, full CP outperforms split CP'' by comparing full and split conformal prediction on two simple setups with images (\texttt{CIFAR-10}{; \citealp{krizhevsky2009learning}}), and graphs (\texttt{CoraML}{; \citealp{bojchevski2018deep}}). It is already mentioned {in a couple} of works that in some cases full CP is not better than split CP in set size {\citep[e.g.,][]{barber2023conformal, zong2025comparative}}. {As shown in} \autoref{fig:full-vs-split-calib-pca}, {however, by} increasing the calibration set size, or reducing the feature dimension {(using PCA), the} performance gap is reduced, until the two methods cross over. These two observations suggest that the performance gap is not governed by the amount of data alone, but by how strongly the model is already determined around the test point. Notably, the argument of ``full CP leverages a better model'' still {serves} as a lower bound on the set size, which full CP converges to, by increasing the size of the calibration set. 

\begin{figure}
    \includegraphics{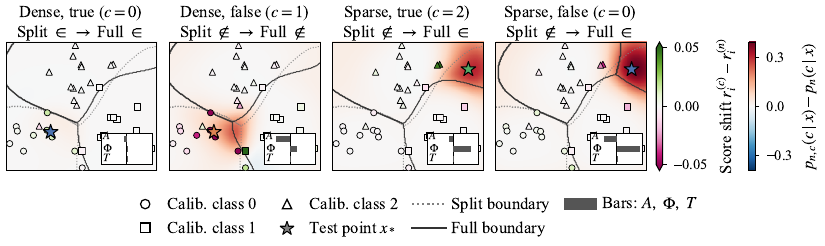}
    \caption{The formability $p_{n,c}(c\mid\vx)-p_n(c\mid\vx)$ by fine-tuning on the pseudo-label $c$ at the test point (star), in a dense [left pair], and a sparse [right pair] (calibration) neighborhood. {Pairs are a} correct, and a wrong label $c$. Calibration points are colored by the shift of their true score ({negligible at many points}), and the bars report $A_{n,c}$, $\Phi_{n,c}$, and $T_{n,c}$ from \autoref{eq:margin-decomposition}.}
    \label{fig:score-anatomy}
\end{figure}

An intuitive explanation for such behaviour is that when the training and calibration data is {scarce}, the model {can still be} deformed towards any label at the test point. Therefore, it is very easy for every (true or false) label of $\xtest$ to pull its own score above the threshold. 
To observe this mechanism in isolation, we show a synthesized setup where every score is visible on a 2-dim landscape (\autoref{fig:score-anatomy}, details are in \autoref{app:score-anatomy}). {We} compare a test point inside a dense calibration neighborhood to one in a sparse support. In all cases, fine-tuning on the pseudo-label deforms the predictive distribution towards the {pseudo-label}. {However, for} the one in the dense region, the calibration density resists this deformation, and the self-pull remains small. In the sparse region there are no proximate data to resist. Hence, every pseudo-label pulls its own score {above the threshold easily}. As a result, the wrong label enters the full set exactly where the model was free to deform. Therefore the performance gap in CPs is governed by the significance of the new test data point w.r.t. the model: how much the model can deform at that point (which is {inversely related to} the size of $\dcal$, and the strength of the pretrained model as a resisting factor). Namely, this deformation is connected to the epistemic uncertainty -- a model with more knowledge (training, and calibration data) {is harder to form} towards wrong labels. 

This direct connection between formability and epistemic uncertainty is particularly important when it compares with methods like softmax entropy. Naturally, the softmax entropy, when evaluated on a single predictor can not distinguish between the two types of uncertainty. \autoref{fig:epistemic-sweeps} shows that by replicating the setup from \autoref{fig:score-anatomy} while moving the generative Gaussians close to each other. At the center, the epistemic uncertainty decrease, while aleatoric uncertainty increase. On a single label distribution however, it is presented as $[1/k]_k$ the entire time. Therefore the softmax entropy does not react to this change, while the formability variance does. 

\subsection{Theory}
\label{sec:theory}

\begin{figure}
    \centering
    \includegraphics{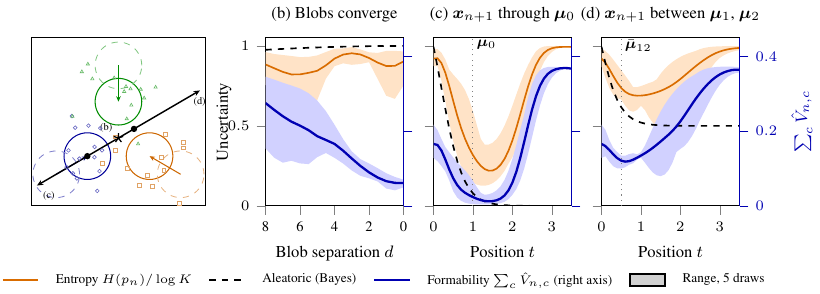}
    \caption{Softmax entropy vs formability variance. {The setup is} similar to \autoref{fig:score-anatomy} (see \autoref{app:epistemic-sweeps}). [Left] Data, and direction of movements. Circles show the mean and standard deviation (radius). We move $\xtest$ from the center to between two blobs (alea $\uparrow$, and epis $\downarrow$), and from the other side towards the center of the other blob (both $\downarrow$). [b] Uncertainty at center (star) when blobs move closer (epis $\downarrow$, and alea $\uparrow$), \emph{only formability reacts to it}. [c], and [d] show it when moving towards the center of one blob (both $\downarrow$), and between two blobs (epis $\downarrow$, alea $\uparrow$). }
    \label{fig:epistemic-sweeps}
\end{figure}

Let $m_\circ(\vx, y):=p_\circ(y\mid \vx)-q_\circ$ be the margin of the label $y$ at $\vx$ in {any} of these predictive regimes: \begin{enumerate*}[label=(\roman*)]
    \item split CP with the pretrained predictive distribution $p_\vtheta$ (call it $m_0$),
    \item full CP fine-tuned on the augmented set $\daugmented$ with predictive distribution $p_{n,c}$ (call it $m_{n,c}$), and 
    \item CP fine-tuned only on the calibration set $\dcal$ with predictive distribution $p_n$ (call it $m_n$).
\end{enumerate*} Note that (iii) is not a valid CP, as the calibration and test points are not exchangeable, but it {represents} the effect of ``better model from more data''. In all cases, a label is in the prediction set if its margin is non-negative. We decompose the full margin and define three new variables $A_{n,c}$, $\Phi_{n,c}$, and $T_{n,c}$ as
\begin{align}
    m_{n,c}(\vx,c) = m_0(\vx,c) &+ (m_n(\vx,c) - m_0(\vx,c)) + (m_{n,c}(\vx,c) - m_n(\vx,c))
    \label{eq:margin-decomposition}\\
    =m_0(\vx, c) &+\underbrace{\bigl[p_\vtheta^{(n)}(c\mid \xtest)-p_\vtheta(c\mid \xtest)\bigr]-\bigl(q_\vtheta^{(n)}-q_\vtheta\bigr)}_{A_{n,c}}\nonumber\\
    &+\underbrace{p_\vtheta^{(c)}(c\mid \xtest)-p_\vtheta^{(n)}(c\mid \xtest)}_{\Phi_{n,c}}
    -\underbrace{\bigl(q_\vtheta^{(c)}-q_\vtheta^{(n)}\bigr)}_{T_{n,c}}. \nonumber
\end{align}
The above decomposition is trivial, but each of the introduced terms has an interpretation: $A_{n, c}$ captures the adaptation-driven efficiency which is the result of {fine-tuning on the calibration set}. It simulates the effect from which we expected full CP to be more efficient. $\Phi_{n,c}$, and $T_{n,c}$ are the effects of training on the additional pseudo-labeled test point (we call the total $\bar\Phi_{n,c}:=\Phi_{n,c}-T_{n,c}$ the net formability). Adding $(\xtest, c)$ to the training set pulls the model towards that label (increasing the margin), but at the same time, it affects the other calibration points, which can change the quantile threshold. Ultimately, the full and split CP {set size is} only a quantization of the margins; i.e. 
\begin{align*}
| \gC^F(\xtest) |-|\gC^S(\xtest)| 
                                  &=\sum_{c\in\gY}\Bigl[
    \1\bigl\{m_0+A_{n,c}+\Phi_{n,c}-T_{n,c}\geq0\bigr\}
    -\1\bigl\{m_0\geq0\bigr\}
    \Bigr].
\end{align*}

\parbold{Under Bayesian Learning.}
Discussing this decomposition under Bayesian learning has two advantages: 
\begin{enumerate*}[label=(\roman*)]
    \item Bayesian learning comes with a well-defined notion of epistemic uncertainty; i.e. the variance of the posterior predictive distribution, and
    \item in the Bayesian framework, we can write fine-tunings {as a sequence} of Bayesian updates, which decouples the effect of the additional pseudo-label from the calibration data.
\end{enumerate*}
Let $\pi_0(\vtheta)$ be the prior parameter distribution centered around $\vtheta_0$ (initial or pretrained model), $\pi_n(\vtheta)$ be the posterior distribution after observing the calibration set $\dcal$ (we call it ``calibration posterior''), and $\pi_{n,c}(\vtheta)$ be the posterior distribution with $\daugmented = \dcal \cup (\xtest, c)$ as evidence (we call it ``augmented posterior''). 
By sequential Bayesian updating (\autoref{lem:sequential-updating}), the posterior distribution {$\pi_{n,c}$ can} be computed using {Bayes' rule} with $\{(\vx_{n+1},c)\}$ as {a single evidence}, and $\pi_n$ as the prior.
{For ease of notation}, let
\begin{align}
    w_{n,c}:=p_{\pi_n}(c\mid \vx_{n+1})
    =\E_{\vtheta\sim\pi_n}[p_\vtheta(c\mid \vx_{n+1})],
    \label{eq:candidate-mass}
\end{align}
and assume $w_{n,c}>0$.
In Bayesian learning, we define the formability on $(\vx_{n+1}, c)$ as
\begin{align}
    \Phi_{n,c}(\vx_{n+1})=\E_{\vtheta\sim\pi_{n,c}}[p_\vtheta(c\mid \vx_{n+1})]-\E_{\vtheta\sim\pi_n}[p_\vtheta(c\mid \vx_{n+1})].
    \label{eq:formability} 
\end{align}

The following lemma shows that the formability is directly related to the epistemic uncertainty of the predictive distribution at $\vx_{n+1}$.

\begin{restatable}{lemma}{formabilityvariancelemma}
\label{lem:formability-variance}
The (per-label) variance of the calibration posterior predictive distribution at $\vx_{n+1}$ is proportional to the formability at $\vx_{n+1}$:
\begin{align}
    \Var_{\vtheta\sim\pi_n}[p_\vtheta(c\mid \vx_{n+1})]
    =w_{n,c}\Phi_{n,c}(\vx_{n+1}), \quad \text{and} 
    \label{eq:variance-from-formability}
\end{align}
{therefore,} $0\leq \Var_{\vtheta\sim\pi_n}[p_\vtheta(c\mid \vx_{n+1})] \leq \Phi_{n,c}(\vx_{n+1})\bigl(1-\Phi_{n,c}(\vx_{n+1})\bigr).$

As a result, for a finite label set, the total posterior variance (epistemic uncertainty) satisfies
\begin{align}
    \E_{\vtheta\sim\pi_n}\!\left[
        \left\|p_\vtheta(\cdot\mid \vx_{n+1})
        -p_{\pi_n}(\cdot\mid \vx_{n+1})\right\|_2^2
    \right]\nonumber
    &=\sum_{c\in\gY}w_{n,c}\Phi_{n,c}(\vx_{n+1})\\
    &\leq\sum_{c\in\gY}\Phi_{n,c}(\vx_{n+1})
    \bigl(1-\Phi_{n,c}(\vx_{n+1})\bigr).
    \label{eq:total-variance-from-formability}
\end{align}
\end{restatable}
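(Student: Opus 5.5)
The plan is to make the augmented posterior explicit through \autoref{lem:sequential-updating} and then reduce every claim to moments of the single random variable $P:=p_\vtheta(c\mid \vx_{n+1})$ with $\vtheta\sim\pi_n$. Since $P\in[0,1]$, everything should follow from elementary inequalities.

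\textbf{Step 1: the identity.} By sequential Bayesian updating, with $\pi_n$ as prior and the single observation $(\vx_{n+1},c)$ as evidence, the likelihood of that observation is $p_\vtheta(c\mid\vx_{n+1})$ and its normalizer is $w_{n,c}$. Hence
\begin{align*}
\pi_{n,c}(\vtheta)=\frac{p_\vtheta(c\mid \vx_{n+1})\,\pi_n(\vtheta)}{w_{n,c}},
\end{align*}
which is well defined because $w_{n,c}>0$. It follows that $\E_{\pi_{n,c}}[P]=\E_{\pi_n}[P^2]/w_{n,c}$. Substituting into \autoref{eq:formability} gives
\begin{align*}
\Phi_{n,c}=\frac{\E_{\pi_n}[P^2]-w_{n,c}^2}{w_{n,c}}=\frac{\Var_{\pi_n}[P]}{w_{n,c}},
\end{align*}
which is \autoref{eq:variance-from-formability}.

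\textbf{Step 2: the per-label bounds.} Nonnegativity of the variance, together with $w_{n,c}>0$, gives $\Phi_{n,c}\ge 0$. For the upper bound I need $w_{n,c}\Phi_{n,c}\le \Phi_{n,c}(1-\Phi_{n,c})$. Since $\Phi_{n,c}\ge0$, it suffices to show $w_{n,c}+\Phi_{n,c}\le 1$. This holds because, by the definition of $\Phi_{n,c}$, the sum $w_{n,c}+\Phi_{n,c}$ equals $\E_{\pi_{n,c}}[P]$, which is at most $1$ since $P$ is a probability.

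\textbf{Step 3: the total variance.} Expanding the squared $\ell_2$ norm coordinatewise and using linearity of expectation, the total posterior variance equals $\sum_{c\in\gY}\Var_{\pi_n}[p_\vtheta(c\mid\vx_{n+1})]$. The sum is finite, so the interchange is harmless. Applying Step 1 and Step 2 termwise yields both the equality and the inequality in \autoref{eq:total-variance-from-formability}.

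Labels with $w_{n,c}=0$ need a small remark. For such a label $P=0$ almost surely under $\pi_n$, so its variance is $0$. Setting $\Phi_{n,c}:=0$ for it keeps both sides consistent.

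\textbf{Main obstacle.} The only real subtlety is Step 1. I must make sure that the formability in \autoref{eq:formability} uses the exact Bayesian augmented posterior, so that the reweighting by the likelihood $p_\vtheta(c\mid\vx_{n+1})$ is legitimate. This means the likelihood of the pseudo-labeled point is precisely the model's predictive probability, with no temperature or extra loss terms. I would state this as the standing assumption inherited from \autoref{lem:sequential-updating}. Beyond that, only integrability is needed, and it is automatic since $P$ is bounded.
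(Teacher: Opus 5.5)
Your proposal is correct and follows the paper's proof: sequential updating gives $\E_{\pi_{n,c}}[P]=\E_{\pi_n}[P^2]/w_{n,c}$ and hence $\Var_{\pi_n}[P]=w_{n,c}\Phi_{n,c}$; the inequality $w_{n,c}\le 1-\Phi_{n,c}$ then gives the per-label bound, and summing over labels gives the total bound. The only difference is cosmetic: you read off $w_{n,c}+\Phi_{n,c}=\E_{\pi_{n,c}}[P]\le 1$ directly, while the paper derives it from $\Var_{\pi_n}[P]\le w_{n,c}(1-w_{n,c})$; both are the same inequality $\E_{\pi_n}[P^2]\le \E_{\pi_n}[P]$, and your explicit use of $\Phi_{n,c}\ge 0$ (and the $w_{n,c}=0$ convention) makes precise steps the paper leaves implicit.
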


The two terms $\Phi_{n, c}$, and $T_{n, c}$ in \autoref{eq:margin-decomposition} are governed by the same posterior on the pseudo-label. As both variance, and $w_{n, c}$ are positive, the formability $\Phi_{n,c}$ is positive. {This means that} training on the pseudo-label necessarily increases its score. At the same time, it also moves every calibration score, and therefore the threshold. To study {all of the above}, we define the \emph{forming kernel} as the covariance kernel over the posterior distribution:
\begin{align}
    \gK_n(z,z'):=\Cov_{\vtheta\sim\pi_n}\bigl(p_\vtheta(y\mid \vx),\,p_\vtheta(y'\mid \vx')\bigr),
    \qquad z=(\vx,y),\ z'=(\vx',y'),
    \label{eq:formability-kernel}
\end{align}
and use the shorthands $C_{i,c}:=\gK_n\bigl((\vx_i,y_i),(\vx_{n+1},c)\bigr)$ for the kernel between calibration points, and the pseudo-label hypothesis, $V_i:=\gK_n\bigl((\vx_i,y_i),(\vx_{i},y_i)\bigr) = \Var_{\vtheta\sim\pi_n}[p_\vtheta(y_i\mid \vx_i)]$, and the calibration-posterior variance at {the test point,} $V_{n,c}:=\Var_{\vtheta\sim\pi_n}[p_\vtheta(c\mid \vx_{n+1})] = \Phi_{n,c}(\vx_{n+1}) w_{n, c}$ (\autoref{lem:formability-variance}).

\begin{figure}[t]
    \centering
    \includegraphics{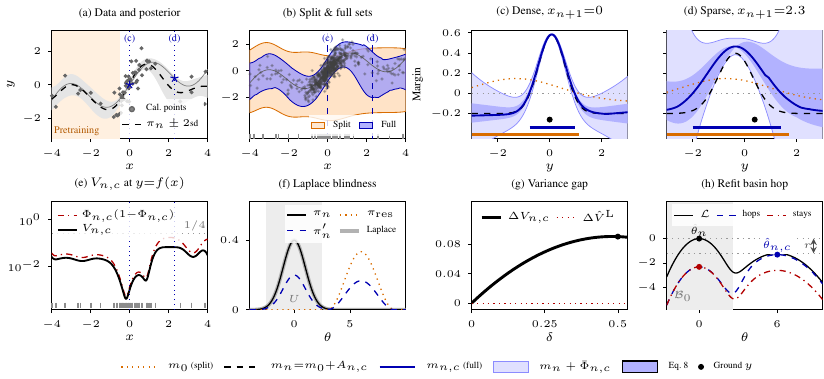}
    \caption{[a - e] \autoref{thm:full-split-decomposition} for Bayesian linear regression: [a] data, training covariate, and calibration points; [b] split and full sets along $x$; [c,d]  the margins $m_\circ$ from \autoref{thm:full-split-decomposition} in dense and sparse area; [e] posterior variance, and its {upper bound} from \autoref{eq:total-variance-from-formability}. Details are in \autoref{app:blr-check}. [f-h] {Illustration} of \autoref{prop:laplace-blind}, and \autoref{prop:map-candidate-selection}: [f] same Laplace approximation on two different distributions: $\pi_n$ (solid), and a second distribution $\pi_n'$ {(dashed); [g]} the variance difference by increasing the impact of $\pi_\mathrm{res}$. [h] The refit can jump to a different basin, i.e. $\pi_{n,c}$ can be outside the basin of the original MAP $\hat\theta_n$.
    }
    \label{fig:section3}
\end{figure}

\begin{restatable}{theorem}{fullsplitdecomposition}
\label{thm:full-split-decomposition}
Under Bayesian training, with scores $r_i^{(n)}=p_{\pi_n}(y_i\mid \vx_i)$, $r_i^{(c)}=p_{\pi_{n,c}}(y_i\mid \vx_i)$ and $q_\vtheta^{(n)}=Q_{\alpha,n}(r_{1:n}^{(n)})$, $q_\vtheta^{(c)}=Q_{\alpha,n}(r_{1:n}^{(c)})$, the following statements hold:
    (i) the update in each calibration score $r_i$ is exactly $r_i^{(c)}-r_i^{(n)}=\frac{C_{i,c}}{w_{n,c}}$ and bounded in absolute value by $\frac{\sqrt{V_i\,V_{n,c}}}{w_{n,c}}$.
    As a result, with $[u\pm v]$ denoting the closed interval $[u-v,\,u+v]$, we have (ii - Decomposition):
    \begin{align}
        m_{n,c} &\in
        m_0 + A_{n,c} + 
        \left[\frac{V_{n, c} - \max_i C_{i, c}}{w_{n,c}}, \frac{V_{n, c} - \min_i C_{i, c}}{w_{n,c}}\right]
        \\&\in
        \underbrace{m_0 + A_{n,c}}_{\text{defined as } m_n} + 
        \underbrace{\left[\frac{\sqrt{V_{n, c}}(\sqrt{V_{n, c}} \pm \max_i \sqrt{V_i})}{w_{n,c}}\right]}_{\substack{\text{net formability (epistemic) }}\bar\Phi_{n, c}}.
        \label{eq:bayesian-margin-decomposition}
    \end{align}
\end{restatable}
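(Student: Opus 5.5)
The plan is to reduce everything to one Bayes-rule identity and then push it through the quantile. By sequential updating (\autoref{lem:sequential-updating}), the augmented posterior is the calibration posterior reweighted by the likelihood of the single pseudo-labeled point:
\begin{align*}
\pi_{n,c}(\vtheta)=\frac{p_\vtheta(c\mid\xtest)\,\pi_n(\vtheta)}{w_{n,c}}.
\end{align*}
For any integrable function $g(\vtheta)$ this gives
\begin{align*}
\E_{\pi_{n,c}}[g]-\E_{\pi_n}[g]=\frac{\E_{\pi_n}[g\,p_\vtheta(c\mid\xtest)]-\E_{\pi_n}[g]\,w_{n,c}}{w_{n,c}}=\frac{\Cov_{\pi_n}(g,p_\vtheta(c\mid\xtest))}{w_{n,c}}.
\end{align*}
This is the same computation behind \autoref{lem:formability-variance}: taking $g=p_\vtheta(c\mid\xtest)$ there yields $\Phi_{n,c}=V_{n,c}/w_{n,c}$.

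\textbf{Step (i).} Take $g=p_\vtheta(y_i\mid\vx_i)$. The identity immediately gives $r_i^{(c)}-r_i^{(n)}=C_{i,c}/w_{n,c}$. Cauchy--Schwarz for covariances then gives $|C_{i,c}|\le\sqrt{V_iV_{n,c}}$, which is the stated bound.

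\textbf{Step (ii), the threshold shift.} I first need a monotonicity property of the conformal quantile $Q_{\alpha,n}$. If each score moves by $\delta_i$ with $\min_i\delta_i\le\delta_i\le\max_i\delta_i$, then
\begin{align*}
Q(r^{(n)})+\min_i\delta_i\;\le\; Q(r^{(n)}+\delta)\;\le\; Q(r^{(n)})+\max_i\delta_i.
\end{align*}
Order statistics are monotone and translation-equivariant, and $Q$ is just a fixed order statistic of the $n$ scores, selected by $\alpha$ and $n$. If the selected index hits the $\delta_\infty$ atom, both sides are infinite and the claim is trivial, so I would mention that case separately. Combining this with step (i) gives
\begin{align*}
T_{n,c}=q^{(c)}-q^{(n)}\in\left[\frac{\min_i C_{i,c}}{w_{n,c}},\,\frac{\max_i C_{i,c}}{w_{n,c}}\right].
\end{align*}

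\textbf{Assembling the margin.} I would write $m_{n,c}=m_0+A_{n,c}+\Phi_{n,c}-T_{n,c}$ using \autoref{eq:margin-decomposition}, with $m_0+A_{n,c}=m_n$ by definition. Substituting $\Phi_{n,c}=V_{n,c}/w_{n,c}$ and the interval for $T_{n,c}$ yields the first inclusion. For the second inclusion, I bound $\pm C_{i,c}$ by $\sqrt{V_iV_{n,c}}\le\sqrt{V_{n,c}}\max_i\sqrt{V_i}$ and factor out $\sqrt{V_{n,c}}/w_{n,c}$.

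The main obstacle is the quantile step. I need care with the convention of $\sQ_\alpha$ applied to the empirical measure with the $\delta_\infty$ atom, and with the fact that reordering of scores under nonuniform shifts does not break the sandwich. The clean argument is the sandwich $r^{(n)}+\min_i\delta_i\le r^{(c)}\le r^{(n)}+\max_i\delta_i$ holding coordinatewise, combined with coordinatewise monotonicity of $Q$. This avoids tracking which index attains the quantile.
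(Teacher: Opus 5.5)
Your proposal is correct and follows the paper's proof essentially step for step. The paper likewise derives (i) from the reweighting identity (its \autoref{lem:candidate-conditioning} with $h(\vtheta)=p_\vtheta(y_i\mid\vx_i)$) plus Cauchy--Schwarz. It then gets the interval for $T_{n,c}$ from \autoref{lem:quantile-shift}, which is proved exactly by your coordinatewise sandwich with monotonicity and translation equivariance of the fixed order statistic, and substitutes $\Phi_{n,c}=V_{n,c}/w_{n,c}$ into the margin identity. The one difference is the infinite-quantile case: the paper excludes it by assumption, and you should exclude it too rather than call it trivial, because there $T_{n,c}=\infty-\infty$ is undefined.
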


\autoref{thm:full-split-decomposition} shows that under Bayesian training, the set size for full CP can be {bounded using only} $\pi_0$, and $\pi_n$ without recomputing $\pi_{n, c}$ over every label $c$. \autoref{fig:section3} evaluates this section exactly on a conjugate Bayesian linear regression (setup in \autoref{app:blr-check}), where all three posteriors are Gaussian and every term of \autoref{eq:bayesian-margin-decomposition} is a closed-form moment: the identities of \autoref{lem:formability-variance} and \autoref{thm:full-split-decomposition}(i) hold to machine precision, the full margin computed from the actual augmented posterior stays inside the band for every candidate, and the band is tight exactly where the calibration neighborhood is dense.

\parbold{Consistent Bayesian learner.} By increasing $n$ (calibration set size) the posterior $\pi_n$ concentrates. \autoref{lem:formability-variance}, and \autoref{thm:full-split-decomposition} {tie} formability, and quantile change to exactly the disagreement between plausible parameters from $\pi_n$. The concentration {shrinks} $\Phi_{n,c}$ and $T_{n,c}$ and the full margin $m_{n, c}$ converges to the calibration-posterior margin $m_{n,c}\to m_0+A_{n,c}$ which is also observed in \autoref{fig:full-vs-split-calib-pca}.

\subsection{Deterministic Fine-Tuning}
\label{sec:map-refits}

\autoref{thm:full-split-decomposition}, and \autoref{lem:formability-variance} are written for Bayesian learning (given posterior $\pi_n$), which is intractable to compute for deep models. A common regime for deterministic fine-tuning finds the maximizer of the (log-) posterior, i.e., $\hat\vtheta_n:=\arg\max_\vtheta\gL(\vtheta)$, for $\gL(\vtheta):=\log\pi_0(\vtheta)+\sum_{i=1}^n\log p_\vtheta(y_i\mid \vx_i)$; e.g. the common $\ell_2$-regularized objective assigns a Gaussian prior $\pi_0(\vtheta)=\gN(\vtheta_0,\tau^2\mI)$.
\citet{daxberger2021laplace} {approximate} $\pi_n$ with a Gaussian distribution $\gN(\hat\vtheta_n, \hat{\mSigma}_n)$ centered {at the MAP} $\hat\vtheta_n$, and covariance given by the inverse Hessian of the log-posterior; i.e. $\mSigma = -\nabla^2_\vtheta\gL(\hat\vtheta_n)^{-1} |_{\hat\theta}$, which
approximates the variance of the \emph{predictive} distribution as $V_{n,c} \approx \hat{V}^{\mathrm L} := \nabla_\vtheta p_{\hat\vtheta_n}(c\mid \xtest)^\top \hat{\mSigma}_n \nabla_\vtheta p_{\hat\vtheta_n}(c\mid \xtest)$. 
This approximation {with a Gaussian} distribution is the result {of a second-order} Taylor expansion of $\gL$ around $\hat\vtheta_n$.
By construction, a shortcoming is that this approximation is always uni-modal (Gaussian), regardless of the shape of the true posterior (which can be multi-modal, skewed, etc.). In other words, there are dissimilar distributions (with different variances) that are all approximated by the same Gaussian distribution under this approach. Aware of this approximation scheme, \autoref{prop:laplace-blind} designs such dissimilar distributions.

\begin{restatable}{proposition}{laplaceblind}
\label{prop:laplace-blind}
Let $\pi_n$ be twice differentiable at $\hat\vtheta_n$, and $U$ be an open neighborhood of $\hat\vtheta_n$. Define $\eta:=\pi_n(\hat\vtheta_n)-\sup_{\vtheta\notin U}\pi_n(\vtheta)>0$. Let $\pi_\mathrm{res}$ be a probability density that vanishes on $U$, and for $\delta\in(0,1)$ satisfy $\delta\|\pi_\mathrm{res}\|_\infty<(1-\delta)\eta$, and set $\pi_n':=(1-\delta)\pi_n+\delta\pi_\mathrm{res}$. Then:

(i) Every maximizer of $\pi_n'$ lies in $U$ and is a maximizer of $\pi_n${. In} particular $\hat\vtheta_n$ maximizes $\pi_n'$, and $\hat V^{\mathrm L}(\pi_n')=\hat V^{\mathrm L}(\pi_n)$.

(ii) With $w_{\mathrm{res},c}:=\E_{\vtheta\sim\pi_\mathrm{res}}[p_\vtheta(c\mid\xtest)]$ and $V_{\mathrm{res},c}:=\Var_{\vtheta\sim\pi_\mathrm{res}}[p_\vtheta(c\mid\xtest)]$,
\begin{align}
    V_{n,c}(\pi_n')-V_{n,c}(\pi_n)
    &\geq\delta\bigl[(1-\delta)(w_{\mathrm{res},c}-w_{n,c})^2-V_{n,c}(\pi_n)\bigr].
    \label{eq:laplace-blind}
\end{align}
\end{restatable}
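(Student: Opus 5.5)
Part (i) follows from a pointwise comparison of densities. For $\vtheta\notin U$ we have $\pi_\mathrm{res}(\vtheta)\leq\|\pi_\mathrm{res}\|_\infty$ and $\pi_n(\vtheta)\leq\pi_n(\hat\vtheta_n)-\eta$. Hence
\begin{align*}
\pi_n'(\vtheta)\leq(1-\delta)\bigl(\pi_n(\hat\vtheta_n)-\eta\bigr)+\delta\|\pi_\mathrm{res}\|_\infty<(1-\delta)\pi_n(\hat\vtheta_n)=\pi_n'(\hat\vtheta_n),
\end{align*}
where the strict inequality uses the assumption $\delta\|\pi_\mathrm{res}\|_\infty<(1-\delta)\eta$, and the last equality uses that $\pi_\mathrm{res}$ vanishes on $U\ni\hat\vtheta_n$. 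So no point outside $U$ is a maximizer of $\pi_n'$. On $U$ we have $\pi_n'=(1-\delta)\pi_n$, so the maximizers of $\pi_n'$ on $U$ coincide with those of $\pi_n$ on $U$, and these are global maximizers of $\pi_n$ by the same inequality. In particular $\hat\vtheta_n$ maximizes $\pi_n'$. Since $U$ is open, $\log\pi_n'=\log(1-\delta)+\log\pi_n$ on a neighborhood of $\hat\vtheta_n$. The additive constant does not change the Hessian, so $\hat\mSigma_n$ is the same for both. The gradient $\nabla_\vtheta p_{\hat\vtheta_n}(c\mid\xtest)$ is also the same, so $\hat V^{\mathrm L}(\pi_n')=\hat V^{\mathrm L}(\pi_n)$. (The Hessian is taken of the log-posterior; the paper phrases it through $\gL$, which equals $\log\pi_n$ up to a constant.)

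For (ii) I would use the law of total variance for a two-component mixture. Write $X=p_\vtheta(c\mid\xtest)$ with $\vtheta\sim\pi_n'$, and let the latent component be $\pi_n$ with weight $1-\delta$ and $\pi_\mathrm{res}$ with weight $\delta$. Then
\begin{align*}
V_{n,c}(\pi_n')=(1-\delta)V_{n,c}(\pi_n)+\delta V_{\mathrm{res},c}+\delta(1-\delta)(w_{\mathrm{res},c}-w_{n,c})^2 .
\end{align*}
The last term is the variance of the conditional mean, namely a Bernoulli$(\delta)$ spread between the two component means. Subtracting $V_{n,c}(\pi_n)$ gives
\begin{align*}
V_{n,c}(\pi_n')-V_{n,c}(\pi_n)=\delta\bigl[(1-\delta)(w_{\mathrm{res},c}-w_{n,c})^2-V_{n,c}(\pi_n)\bigr]+\delta V_{\mathrm{res},c}.
\end{align*}
Dropping $\delta V_{\mathrm{res},c}\geq0$ yields \autoref{eq:laplace-blind}.

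I expect no step to be a real obstacle. The one point needing care is (i): I must argue that the Laplace quantities depend only on the local behaviour of $\log\pi_n'$ at $\hat\vtheta_n$, and that this point stays the maximizer. If $\pi_n$ has several maximizers in $U$, I would state the conclusion for the chosen MAP $\hat\vtheta_n$, as the statement does.
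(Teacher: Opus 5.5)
Your proposal is correct and follows the paper's proof. Part (i) uses the same pointwise bound outside $U$ and the same observation that $\log\pi_n'$ and $\log\pi_n$ differ by the constant $\log(1-\delta)$ on the open set $U$. In part (ii) you reach the same identity $V_{n,c}(\pi_n')=(1-\delta)V_{n,c}(\pi_n)+\delta V_{\mathrm{res},c}+\delta(1-\delta)(w_{\mathrm{res},c}-w_{n,c})^2$ via the law of total variance, while the paper gets it by expanding first and second moments linearly in the density; both then drop $\delta V_{\mathrm{res},c}\geq 0$.
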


Part (i) shows that the Laplace approximation of the different distributions remains the same while (ii) the variance of the predictive distribution can differ. This is well known, and it is what motivates mixing several Laplace fits \citep{eschenhagen2021mola}, ensembling across basins \citep{wilson2020bayesian}, or searching directly for plausible parameters that disagree about the prediction \citep{schweighofer2023quam}. 
{Basically, the} issue stems from the fact that the Bayesian posterior distribution can have multiple basins, and basins other than the one with $\hat\theta_n$ can {contribute} meaningfully to the predictive variance. {Formability retrains} the model over each candidate label, meaning that it can jump to the basin that is more likely to predict the candidate label. The following proposition shows that {the refit can result in a maximum a posteriori estimate outside} of the basin of the original MAP.

\begin{restatable}{proposition}{mapcandidateselection}
\label{prop:map-candidate-selection}
Let $\gB_0$ be a basin containing $\hat\vtheta_n$, and suppose that for some $r\geq0$ and $0<u<v\leq1$, and some $\theta'\notin \gB_0$ outside {of the basin} containing the MAP estimate, we have \[
\sup_{\theta \in \gB_0} p_\theta(c\mid\xtest) \le u < v \le p_{\theta'}(c\mid\xtest), \quad \text{and} \quad \gL(\hat\theta_n) - \gL(\theta') \le  r.
\]
Then, if $\log(v/u) > r$, the augmented MAP $\hat\theta_{n,c} \notin \gB_0$. In either case $p_{\hat\theta_{n,c}}(c\mid\xtest) \ge e^{-r} v$.
\end{restatable}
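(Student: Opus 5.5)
The plan is to make the augmented objective explicit and compare two candidate values of it. By the sequential updating of \autoref{lem:sequential-updating}, the augmented log-posterior is $\gL_c(\vtheta):=\gL(\vtheta)+\log p_\vtheta(c\mid\xtest)$ up to an additive constant. The augmented MAP is therefore $\hat\vtheta_{n,c}=\arg\max_\vtheta\gL_c(\vtheta)$. The whole argument reduces to comparing $\gL_c$ at the candidate $\vtheta'$ with its value at any point of $\gB_0$, using only two facts: $\gL$ is maximized at $\hat\vtheta_n$, and the likelihood term is controlled by $u$ and $v$.

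\textbf{Step 1 (lower bound at $\vtheta'$).} From the hypotheses,
\[
\gL_c(\vtheta')=\gL(\vtheta')+\log p_{\vtheta'}(c\mid\xtest)\geq\gL(\hat\vtheta_n)-r+\log v .
\]

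\textbf{Step 2 (upper bound on $\gB_0$).} For any $\vtheta\in\gB_0$ we have $\gL(\vtheta)\leq\gL(\hat\vtheta_n)$, because $\hat\vtheta_n$ is the global maximizer of $\gL$. Combined with $p_\vtheta(c\mid\xtest)\leq u$ this gives
\[
\gL_c(\vtheta)\leq\gL(\hat\vtheta_n)+\log u .
\]

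\textbf{Step 3 (excluding $\gB_0$).} If $\log(v/u)>r$, then $\sup_{\gB_0}\gL_c\leq\gL(\hat\vtheta_n)+\log u<\gL(\hat\vtheta_n)-r+\log v\leq\gL_c(\vtheta')$. So no point of $\gB_0$ attains the maximum of $\gL_c$, and hence $\hat\vtheta_{n,c}\notin\gB_0$. The inequality is strict, so this conclusion is robust even when the supremum over $\gB_0$ is not attained.

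\textbf{Step 4 (the "in either case" bound).} Optimality of $\hat\vtheta_{n,c}$ gives $\gL_c(\hat\vtheta_{n,c})\geq\gL_c(\vtheta')\geq\gL(\hat\vtheta_n)-r+\log v$. Also $\gL(\hat\vtheta_{n,c})\leq\gL(\hat\vtheta_n)$. Subtracting the two yields
\[
\log p_{\hat\vtheta_{n,c}}(c\mid\xtest)=\gL_c(\hat\vtheta_{n,c})-\gL(\hat\vtheta_{n,c})\geq\log v-r ,
\]
that is, $p_{\hat\vtheta_{n,c}}(c\mid\xtest)\geq e^{-r}v$. This step does not use the condition $\log(v/u)>r$.

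The main point needing care is not the algebra but the setup. First, $\hat\vtheta_n$ must be the global maximizer of $\gL$ (not merely a local one), since Step 2 needs $\gL\leq\gL(\hat\vtheta_n)$ on all of $\gB_0$ and Step 4 needs it at $\hat\vtheta_{n,c}$. Second, the augmented MAP must be defined on the same objective as $\gL$ plus the single log-likelihood term, so that the normalizing constants cancel. Third, a maximizer of $\gL_c$ must exist; otherwise I would phrase the argument in terms of any maximizer, or take suprema.
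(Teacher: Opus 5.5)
Your proposal is correct and follows essentially the paper's proof. Both compare the augmented objective $\gL(\vtheta)+\log p_\vtheta(c\mid\xtest)$ at $\hat\vtheta_{n,c}$ against its value at $\vtheta'$, and use global maximality of $\hat\vtheta_n$ to obtain $p_{\hat\vtheta_{n,c}}(c\mid\xtest)\geq e^{-r}v$. The only difference is ordering: the paper derives this probability bound first and then concludes $\hat\vtheta_{n,c}\notin\gB_0$ from $e^{-r}v>u$, so your separate Steps~2--3 are valid but already implied by Step~4.
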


\section{Experiments}
\label{sec:experiments}

\begin{figure}[t]
  \centering
  \includegraphics{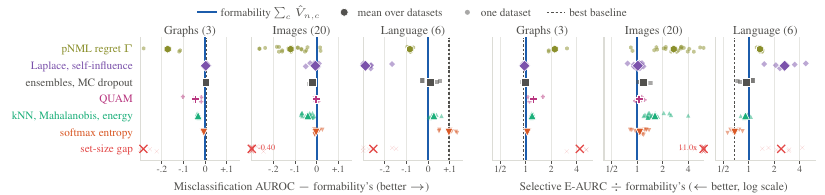}
  \caption{
    Formability against families of uncertainty {measures, for in-distribution} data, on graphs, images and language. For each modality the relative performance is gathered from different datasets and models. Each family of {methods includes} several variations (e.g. Laplace is last-layer and all-parameter linearised, last-layer sampled; self-influence). A detailed comparison is in \autoref{fig:leaderboard-grouped-all}.
  }
  \label{fig:leaderboard-grouped}
\end{figure}

With our experiments we aim to show that 
(i) the performance gap between full and split conformal prediction is governed by the size of the calibration set, training set, and the dimension of the feature space; and that 
(ii) it {stems from the} model's formability which is directly connected to epistemic uncertainty (synthetic experiments).
(iii) The formability itself can serve as a tool to quantify epistemic uncertainty, through the variance as computed in \autoref{eq:total-variance-from-formability}, $\hat V_{n, c} = \hat{w}_{n, c} \hat\Phi_{n, c}$.
We compare formability against several existing uncertainty quantification methods: softmax entropy \citep{hendrycks2017baseline}, deep ensembles \citep{lakshminarayanan2017simple}, MC-dropout \citep{gal2016dropout}, Laplace approximation \citep{daxberger2021laplace}, and DeepKNN \citep{papernot2018deep}. The closest {methods to formability are pNML regret \citep{bibas2019deep}, and the set size gap, against which we specifically compare, together with} the best of the aforementioned methods in \autoref{fig:pnml-regret-main}.
Reporting against the {best does} not mean that there is a single best method at every level, but rather every method has its own sweet spot. {Overall} relative performance is reported in \autoref{fig:leaderboard-grouped}, and detailed reports are in \autoref{app:supplementary-results}.

For language models, refitting together over a calibration set and test point {for each} pseudo-label becomes unrealistically expensive. Therefore we approximate the refit with a single step of LoRA \citep{Hu2021LoRALA} fine-tuning. Our updates are done through damped curvature-informed steps, and for calibration points we cache the gradients.
We compare methods on six four-option question-answering setups (MMLU, MedQA on small Qwen, and Llama models). 
The specifics of the setup are in \autoref{app:llm-experiments}. Note that the language modeling setup is defined in a very {simplistic} way. The model is simply asked to choose {one of the A--D} answers and the score is the softmax over those specific tokens (see \autoref{app:llm-experiments}). The model updates are over a LoRA that is applied to the query and value matrices of the attention layers. These are specifically chosen for the computation to fit on a single GPU. The results are also not comparable to the image and graph setups. Defining a clear formability signal for language models is an open question, and we leave it for future work.

Throughout this paper we assumed the TPS score function{ \citep{sadinle2019least}} $s(\vx,y) = p_\vtheta(y\mid \vx)$, and $1-\alpha=0.9$ unless otherwise specified. While {CP} is commonly phrased with non-conformity (disagreement) scores, we use agreement as it simplifies the derivations, and interpretation. Setups are equivalent through a change in sign. For uncertainty quantification, we use two main metrics: (i) the misclassification AUROC{ \citep{hendrycks2017baseline}} which is the {area under the curve of} true positive rate vs false positive rate over rejection according to the uncertainty score function; 0.5 is {chance} and 1.0 is the perfect score.
(ii) The selective E-AURC{ \citep{geifman2019biasreduced}} is {the difference in} AURC between the uncertainty score, and the ideal uncertainty score (the oracle). In this score lower is better, and 0 is the best score. 

\parbold{Full-vs-Split CP.} In \autoref{fig:full-vs-split-calib-pca} we compare full and split conformal prediction on node  and image-classification tasks. For {dimensionality reduction,} we apply PCA on input features. Specifically on \texttt{CIFAR-10}, the features are the embeddings from {an \texttt{ImageNet}-pretrained} ResNet-18 \citep{he2016deep} backbone, (training is done on the {two-layer} MLP as classification head). Since full CP is computationally expensive to examine, we report these results on a random subset of the test set containing $300\sim400$ instances. Prior to observing test examples, we can approximate the full CP set size by a small (e.g. $m=50$) number of calibration {points over leave-one-out} refits (keeping $n-1$ points as calibration set and the left out point as test set). We run ablation study and show in \autoref{fig:train-calib-grid}, that by increasing calibration size, training size, or decreasing the dimension of the feature space, the full CP outperforms split CP, and it converges to the adaptation lower bound $|\gC_B|$ (consistent with \autoref{thm:full-split-decomposition}). With a larger training set, the effect of calibration set shows a diminishing return.

\parbold{Synthetic experiment.} In \autoref{fig:score-anatomy} we visualize the concept discussed in \autoref{sec:formability}. If a point is far from the calibration set it can deform the model more easily, its score can increase more, and full conformal set size would also inflate. It also shows the three variables {defined in} \autoref{eq:margin-decomposition}. To synthesize the data, {we sample} three classes from overlapping Gaussians, and use a multinomial logistic regression over fixed RBF features as the model. The pretrained model $\vtheta_0$ is trained on only $3$ points per class, and every fine-tuning minimizes {$\ell_2$-regularized cross-entropy towards} $\vtheta_0$ (i.e. the MAP objective with the prior $\gN(\vtheta_0,\tau^2\mI)$). We fine-tune once on the calibration set ($n=42$) to obtain $p_n$, and once per candidate label $c$ on $\daugmented$ for $p_{n,c}$. The complete anatomy for every candidate label {is in} \autoref{app:score-anatomy}. An important extension of this experiment is shown in \autoref{fig:epistemic-sweeps}, where we move the Gaussian generators {towards the center} which reduces epistemic uncertainty, and increases aleatoric uncertainty at the center. Here the softmax entropy remains steady, while the formability variance decreases. This shows that formability {directly distinguishes} the epistemic uncertainty, while the softmax entropy is agnostic to this decomposition.   

In \autoref{fig:section3} we evaluate \autoref{thm:full-split-decomposition} in a setup where every posterior can be computed exactly. We sample from $y=\sin(1.5x)+0.3x+\varepsilon$, with Gaussian noise of standard deviation $0.5$, and use Bayesian linear regression over fixed RBF features. The model is pretrained on $40$ points in $[-4,-0.5]$, while the $60$ calibration points concentrate around $x=0$. The full sets are narrower than split, especially where calibration data is dense. In sparse regions, the posterior variance and formability are larger, and the full set expands relative to the model fitted on calibration data alone. The full margins remain within the theoretical bounds at both test points. Details are in \autoref{app:blr-check}.

\parbold{Formability as a quantification of epistemic uncertainty.} While {refitting on every label as} an uncertainty quantification is {not an entirely new} concept (\citet{bibas2019deep} {define} regret based on refitting to every candidate label), we connect the refit through formability to Bayesian posterior variance, and the uncertainty captured in full conformal prediction. 
While {both we} and \citet{bibas2019deep} take information from {models fitted on pseudo-labels}, they take the logarithm-sum of the candidate probabilities, namely, {$\log\bigl(1 + \sum_{c\in\gY} \Phi_{n,c}\bigr)$}, while we directly estimate the posterior variance $\Phi_{n,c}\cdot w_{n,c}$. This is derived directly from CP's uncertainty. Interestingly, as shown in \autoref{fig:pnml-regret-main}, this minor change in the {definition can} result in a different evaluation on the data. Formability performs significantly better on in-domain error {detection}, while the regret-based uncertainty {performs better on out-of-distribution} data. We compare all methods briefly in \autoref{fig:leaderboard-grouped}. {The CP set} size gap (full - split) itself is not a good uncertainty quantification; mainly because it quantizes all the information captured in \autoref{eq:total-variance-from-formability}, which is a continuous value, into a binary decision {per} label.

\begin{figure}[t]
  \centering
  \includegraphics{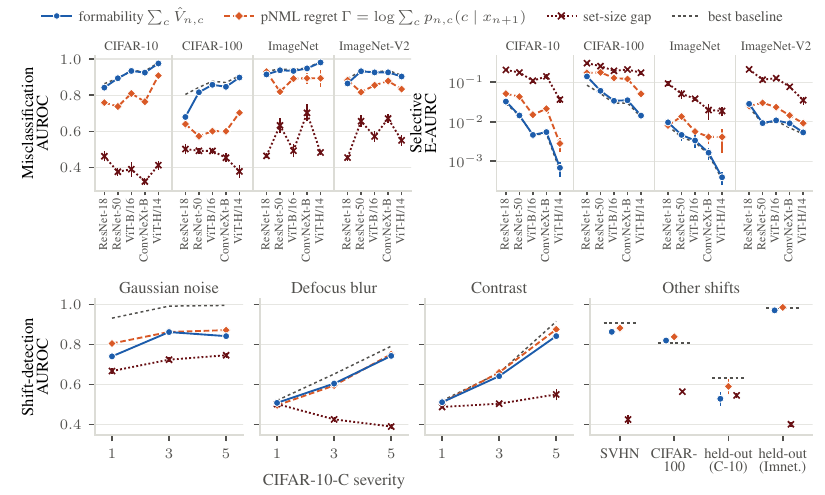}
  \caption{
    Comparison of formability, regret and conformal set size gap over image benchmarks.
    }
  \label{fig:pnml-regret-main}
\end{figure}

\section{Conclusion}

We investigated the performance gap between full, and split conformal prediction. We showed that the gap is the resultant of two opposite effects: adapting the model on the calibration set (more data resulting in better model, and hence smaller sets), and forming it towards each pseudo-label at the test point. Under Bayesian learning, the second effect (formability) is directly tied to posterior predictive variance, and fades as the calibration posterior concentrates. This explains why full CP can become more efficient when the model is already well determined around the test point, and in sparse regions it performs poorly. However, this gap suggest to use formability as a measure of epistemic uncertainty that carries much more information compared to set size (due to the 0/1 binarization). Our experiments on images and graphs show its value for detecting prediction errors; and to some degree, ood detection. In many benchmarks, formability is on par with the best of the existing uncertainty quantification methods, and in some cases it outperforms them.

\parbold{Limitations.} The most important limitation is the computational cost of refitting the model on every candidate label. Additionally our analysis, and definition of formability is based on the softmax score function. CP itself is agnostic to the choice of score function, and scores like APS also can lead to meaningful interpretations of the full-vs-split gap. We leave this for future work. Additionally, conformal prediction assumes exchangeability of the data, which is not always satisfied in practice, specifically for uncertainty quantification use cases. Formability implicitly assume that refits are able to reach the MAP estimate. This could challenge the quality in overparameterized models, or in wrong early stopping scenarios{. Potentially} our language model experiments are affected by this. 

\newpage 
\subsection*{AI use statement}

We used generative AI tools for implementing the experiments (we designed the experimental setup ourselves and asked AI to implement and run them. We also inspected the {results for} potential issues). We also used AI for rephrasing parts of the paper, from grammar and choice of words, to better presenting the proofs. The proofs are ultimately rewritten by the authors, but AI was used to assist in the process.
The research idea was originally proposed by the authors, including the main idea, theory, and the primary design of the experiments. 

\subsection*{Ethics statement}

This work does not raise any ethical concerns, as it is a general study, and a theoretical contribution to uncertainty quantification. The experiments are conducted on publicly available datasets, and the models are trained on these datasets. The authors have no conflicts of interest to disclose.

\subsection*{Reproducibility statement}

All datasets, and models used in the experiments are publicly available. The code for reproducing the experiments is publicly available at \url{https://github.com/soroushzargar/Formability}. 

\bibliography{iclr2027_conference}
\bibliographystyle{iclr2027_conference}

\appendix
\newpage

\section{Related Works}
\label{sec:related-works}
\parbold{Conformal prediction.} CP \citep{vovk2005algorithmic,angelopoulos2024theoretical} is {considered a} distribution-free uncertainty quantification technique, where the prediction sets are guaranteed to cover the true label with high probability. With that, many consider the size of the prediction {set to represent} how uncertain the model is on a specific input. While the method works with any model, and any score function, there are many works designing various scores to improve the set size efficiency, adaptivity, and approximation of conditional coverage {\citep{romano2020classification,Angelopoulos2020UncertaintySF,zargarbashi2023conformal}}. We use the basic threshold prediction sets \citep{sadinle2019least} since it directly connects to the predictive distribution of the model which can connect perfectly with our decomposition in \autoref{sec:formability}, and through which we bound the epistemic uncertainty{. In} \autoref{app:train-vs-calib-aps} we repeat our experiments under an adaptive (rank-based) score, where the full--split gap is wider, and never closes. While full and split conformal prediction are two endpoints of a computational cost spectrum, there are other variants such as Jackknife+ \citep{barber2021jackknife} in between. There are techniques that approximate full CP with {lower computational cost \citep{bhatt2021fast,martinez2023approximating}}, through influence functions \citep{koh2017influence}. Both directions are outside of our focus, as we address the fundamental difference between full and split CP.

\parbold{CP and epistemic uncertainty.} \citet{javanmardi2026optimal,cabezas2025epicscore} {incorporate a second-order} distribution (which contains explicit epistemic uncertainty) {into conformal prediction}. {In the reverse direction, \citet{chau2026quantifying,correia2024information}} provide interpretations of split conformal prediction in uncertainty quantification; one through connecting split CP to imprecise integral probability metrics, and the other through information theory. In contrast, we provide a direct connection between the difference of full and split CP set size {and the epistemic} uncertainty, by bounding the posterior variance of the predictive distribution. 

Our work connects three arguments: \begin{enumerate*}[label=(\roman*)]
    \item When does full CP outperform split CP? Inconsistencies in comparison are mentioned by \citet{barber2023conformal} and extensively studied by \citet{zong2025comparative}. However, {neither work discusses what the underlying factor is} that governs this difference. We show that the unexpected inefficiency in full CP set size stems from the formability of the model at the test point. 
    \item We show that, in a Bayesian framework, the formability is directly related to the posterior variance (epistemic uncertainty). The concept of \emph{deterministic} formability itself is {not} entirely new. \citet{bibas2019deep} {define} a learnability measure (regret) as the log sum of the predictive probabilities $\hat\theta_{n, c}$ normalized by the probabilities from $\hat\theta_{n}$. However, its connection with Bayesian posterior variance, comparison with deterministic Laplace approximation through basin decomposition \citep{daxberger2021laplace}, or its influence on full conformal prediction do not follow automatically.
    \item We show that on larger models and various tasks this notion of formability still can be used as a practical tool to quantify epistemic uncertainty. 
\end{enumerate*}
Our work also connects with many other uncertainty quantification approaches. While the posterior distribution of parameters is a canonical way to quantify epistemic uncertainty, on larger models it is computationally near-impossible to compute. Laplace Redux \citep{daxberger2021laplace} approximates this distribution as a Gaussian around the MAP estimate{. However}, as we show in \autoref{prop:laplace-blind} it is agnostic to other basins of attraction, and therefore miscalculates the epistemic uncertainty where other highly {plausible basins disagree with the current predictive distribution}. A {remedy to that is MC-dropout \citep{gal2016dropout}}, and ensemble training {\citep{lakshminarayanan2017simple}}, which are parameter centric approximations. Formability however is a data-centric approximation. On {computation, the comparison depends on whether} multiple model inference or test-time training is more favorable. \citet{schweighofer2023quam} instead searches for adversarial models with high disagreement to the baseline, while attaining high posterior plausibility. In comparison, formability is a local measure, searching for such models through retraining. The comparison results as shown in \autoref{fig:cost-quality} {are very close}. Our ultimate goal is also not to come up with a new {notion} of uncertainty, but to showcase what is the underlying uncertainty quantification in conformal prediction.

\section{Supplementary to Theory}

We first prove the sequential updating property used in \autoref{sec:theory}.

\begin{lemma}[Sequential Bayesian updating]
\label{lem:sequential-updating}
Let $\pi_n$ be the posterior from the prior $\pi_0$ and evidence $\dcal$, and $\pi_{n,c}$ be the posterior from the prior $\pi_0$ and evidence $\daugmented$. For a fixed $\vx_{n+1}$, assume its label to be independent of $\dcal$ given $\vtheta$, and $w_{n,c}=\E_{\vtheta\sim\pi_n}[p_\vtheta(c\mid \vx_{n+1})]>0$ (recalling \autoref{eq:candidate-mass}). Then $\pi_{n,c}$ {equals} the posterior computed from the prior $\pi_n$ on the single observation $\{(\vx_{n+1},c)\}$; i.e.,
\begin{align}
    \pi_{n,c}(\vtheta)
    =\frac{p_\vtheta(c\mid \vx_{n+1})\pi_n(\vtheta)}
    {\E_{\vtheta\sim\pi_n}[p_\vtheta(c\mid \vx_{n+1})]}
    =\frac{p_\vtheta(c\mid \vx_{n+1})\pi_n(\vtheta)}{w_{n,c}}.
    \label{eq:appendix-candidate-posterior}
\end{align}
\end{lemma}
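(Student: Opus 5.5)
The plan is to write both posteriors explicitly with Bayes' rule over the prior $\pi_0$, and then factor out the contribution of the calibration set. I will work with densities with respect to a common reference measure on the parameter space, writing $p_\vtheta(\gD_n):=\prod_{i=1}^n p_\vtheta(y_i\mid \vx_i)$ for the likelihood of the calibration labels. Then
\begin{align*}
\pi_n(\vtheta)=\frac{p_\vtheta(\gD_n)\,\pi_0(\vtheta)}{Z_n},\qquad Z_n:=\int p_{\vtheta'}(\gD_n)\,\pi_0(\vtheta')\,d\vtheta'.
\end{align*}
I will assume $Z_n\in(0,\infty)$ so that $\pi_n$ is a genuine probability density.

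For the augmented evidence $\gD_{n+1}^{(c)}$, the conditional independence assumption lets the likelihood factor. Given $\vtheta$, the label at $\vx_{n+1}$ is independent of $\gD_n$, and the covariates are fixed or conditioned on. The joint likelihood is therefore $p_\vtheta(\gD_n)\,p_\vtheta(c\mid \vx_{n+1})$, which gives
\begin{align*}
\pi_{n,c}(\vtheta)=\frac{p_\vtheta(c\mid \vx_{n+1})\,p_\vtheta(\gD_n)\,\pi_0(\vtheta)}{\int p_{\vtheta'}(c\mid \vx_{n+1})\,p_{\vtheta'}(\gD_n)\,\pi_0(\vtheta')\,d\vtheta'}.
\end{align*}

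Next I divide the numerator and the denominator by $Z_n$. The numerator becomes $p_\vtheta(c\mid \vx_{n+1})\,\pi_n(\vtheta)$. The denominator becomes $\int p_{\vtheta'}(c\mid \vx_{n+1})\,\pi_n(\vtheta')\,d\vtheta'=\E_{\vtheta\sim\pi_n}[p_\vtheta(c\mid \vx_{n+1})]=w_{n,c}$. This is exactly the Bayes update of the prior $\pi_n$ on the single observation $(\vx_{n+1},c)$, which yields \autoref{eq:appendix-candidate-posterior}.

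The main point requiring care is well-definedness of the normalisers. The assumption $w_{n,c}>0$ ensures that the augmented normaliser, which equals $Z_n w_{n,c}$, is strictly positive. Finiteness follows because $p_\vtheta(c\mid\vx_{n+1})\le 1$, so $w_{n,c}\le 1$. The division is therefore legitimate, and $\pi_{n,c}$ is a valid density that integrates to one. Everything else is the factorisation of the likelihood, which rests entirely on the stated conditional independence.
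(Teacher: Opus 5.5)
Your proof is correct and is essentially the paper's argument. The paper factors the joint likelihood as $p(E_2\mid\vtheta,E_1)\,p(E_1\mid\vtheta)$, substitutes $p(E_1\mid\vtheta)\pi_0(\vtheta)=\pi_n(\vtheta)\,p(E_1)$, and uses the conditional independence to set $p(E_2\mid\vtheta,E_1)=p_\vtheta(c\mid\vx_{n+1})$, which is the same as your dividing numerator and denominator by $Z_n$. Your explicit check that the augmented normaliser $Z_n w_{n,c}$ lies in $(0,Z_n]$ is slightly more careful than the paper, and the product form of $p_\vtheta(\gD_n)$ is harmless but not needed, since only the factorisation between $\gD_n$ and the new point is used.
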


\begin{proof}
For two sets of evidence $E_1$ and $E_2$, Bayes' rule with the joint evidence gives
\begin{align*}
    p(\vtheta\mid E_1,E_2)
    =\frac{p(E_1,E_2\mid\vtheta)\,\pi_0(\vtheta)}{p(E_1,E_2)}
    =\frac{p(E_2\mid\vtheta,E_1)\,p(E_1\mid\vtheta)\,\pi_0(\vtheta)}{p(E_2\mid E_1)\,p(E_1)},
\end{align*}
Bayes' rule on $E_1$ alone yields $p(E_1\mid\vtheta)\,\pi_0(\vtheta)=p(\vtheta\mid E_1)\,p(E_1)$, and the law of total probability expands $p(E_2\mid E_1)=\int p(E_2\mid\vartheta,E_1)\,p(\vartheta\mid E_1)\,d\vartheta$. Substituting both,
\begin{align}
    p(\vtheta\mid E_1,E_2)
    =\frac{p(E_2\mid\vtheta,E_1)\,p(\vtheta\mid E_1)}
    {\int p(E_2\mid\vartheta,E_1)\,p(\vartheta\mid E_1)\,d\vartheta}.
    \label{eq:sequential-bayesian-updating}
\end{align}
{This means that} the posterior after observing $E_1$ can be used as the prior when conditioning on $E_2$, and it results in the same posterior as conditioning jointly on both sets of evidence.

In our case, $E_1=\dcal$ and $E_2=\{(\vx_{n+1},c)\}$, therefore $p(\vtheta\mid E_1)=\pi_n(\vtheta)$ and $p(\vtheta\mid E_1,E_2)=\pi_{n,c}(\vtheta)$. With $\vx_{n+1}$ fixed, 
 we have $p(E_2\mid\vtheta,E_1)=p_\vtheta(c\mid \vx_{n+1})$, and the denominator of \autoref{eq:sequential-bayesian-updating} becomes $\E_{\vtheta\sim\pi_n}[p_\vtheta(c\mid \vx_{n+1})]=w_{n,c}$, which is positive by assumption. 
\end{proof}

\formabilityvariancelemma*

\begin{proof}
Using \autoref{eq:appendix-candidate-posterior},
\begin{align*}
    \E_{\vtheta\sim\pi_{n,c}}[p_\vtheta(c\mid \vx_{n+1})]
    &=\int p_\vtheta(c\mid \vx_{n+1})\pi_{n,c}(\vtheta)\,d\vtheta
    \\&= \int p_\vtheta(c\mid \vx_{n+1})
    \frac{p_\vtheta(c\mid \vx_{n+1})\pi_n(\vtheta)}{w_{n,c}}\,d\vtheta
    =\frac{
        \E_{\vtheta\sim\pi_n}[p_\vtheta(c\mid \vx_{n+1})^2]
    }{w_{n,c}}.
\end{align*}
{Substituting} it into \autoref{eq:formability} yields
\begin{align*}
    \Phi_{n,c}(\vx_{n+1})
    &=\frac{
        \E_{\vtheta\sim\pi_n}[p_\vtheta(c\mid \vx_{n+1})^2]
    }{w_{n,c}} - w_{n, c}\\
    &=\frac{
        \E_{\vtheta\sim\pi_n}[p_\vtheta(c\mid \vx_{n+1})^2]
        -w_{n,c}^2
    }{w_{n,c}}
    =\frac{
        \Var_{\vtheta\sim\pi_n}[p_\vtheta(c\mid \vx_{n+1})]
    }{w_{n,c}}.
\end{align*}
Multiplying both sides by $w_{n,c}$ results in \autoref{eq:variance-from-formability}.

For the remainder of the proof, we will use the following rule: for any $X \in [0, 1]$ we have \begin{align*}
    \Var(X) = \E[X^2] - (\E[X])^2 \leq \E[X] - (\E[X])^2 = \E[X](1 - \E[X]).
\end{align*} As $0\leq p_\vtheta(c\mid \vx_{n+1})\leq1$, its posterior variance
is bounded by
\begin{align*}
    \Var_{\vtheta\sim\pi_n}[p_\vtheta(c\mid \vx_{n+1})]
    \leq w_{n,c}(1-w_{n,c}).
\end{align*}

Replacing the variance in {\autoref{eq:variance-from-formability}} with this inequality gives $\Phi_{n,c}(\vx_{n+1})\leq1-w_{n,c}$, and hence
$w_{n,c}\leq1-\Phi_{n,c}(\vx_{n+1})$. Therefore,
\begin{align*}
    \Var_{\vtheta\sim\pi_n}[p_\vtheta(c\mid \vx_{n+1})]
    &=w_{n,c}\Phi_{n,c}(\vx_{n+1})\\
    &\leq\Phi_{n,c}(\vx_{n+1})
    \bigl(1-\Phi_{n,c}(\vx_{n+1})\bigr).
\end{align*}

For a finite number of labels, the expected squared Euclidean norm is
\begin{align*}
    &\E_{\vtheta\sim\pi_n}\!\left[
        \left\|p_\vtheta(\cdot\mid \vx_{n+1})
        -p_{\pi_n}(\cdot\mid \vx_{n+1})\right\|_2^2
    \right]=\sum_{c\in\gY}
    \E_{\vtheta\sim\pi_n}\!\left[
        \left(p_\vtheta(c\mid \vx_{n+1})-w_{n,c}\right)^2
    \right]\\
    &\quad=\sum_{c\in\gY}
    \Var_{\vtheta\sim\pi_n}[p_\vtheta(c\mid \vx_{n+1})]\\
    &\quad=\sum_{c\in\gY}w_{n,c}\Phi_{n,c}(\vx_{n+1})\\
    &\quad\leq\sum_{c\in\gY}\Phi_{n,c}(\vx_{n+1})
    \bigl(1-\Phi_{n,c}(\vx_{n+1})\bigr),
\end{align*}
which proves {\autoref{eq:total-variance-from-formability}}.

\end{proof}

\subsection{Proof of \autoref{thm:full-split-decomposition}}

Before proving \autoref{thm:full-split-decomposition}, we first prove two lemmas:
\begin{enumerate*}[label=(\roman*)]
    \item \autoref{lem:candidate-conditioning} shows how conditioning on the pseudo-labeled test point moves the posterior expectation w.r.t. the expectation {under the prior}. In our case the posterior is the augmented posterior, {and the prior} is the calibration posterior. 
    \item \autoref{lem:quantile-shift} records the stability of the conformal quantile under coordinatewise shifts of the scores.
\end{enumerate*}
The proof relies on two auxiliary lemmas: the first shows how conditioning on the pseudo-labeled test point moves \emph{any} posterior expectation, and the second records the stability of the conformal quantile under coordinatewise shifts of the scores.

\begin{lemma}
\label{lem:candidate-conditioning}
For any function $h:\Theta\to\R$ with $\E_{\vtheta\sim\pi_n}|h(\vtheta)|<\infty$,
\begin{align}
    \E_{\vtheta\sim\pi_{n,c}}[h(\vtheta)]
    =\E_{\vtheta\sim\pi_n}[h(\vtheta)]
    +\frac{\Cov_{\vtheta\sim\pi_n}\bigl(h(\vtheta),\,p_\vtheta(c\mid \vx_{n+1})\bigr)}{w_{n,c}}.
    \label{eq:candidate-conditioning}
\end{align}
\end{lemma}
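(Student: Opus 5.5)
The plan is to reduce everything to the density formula of \autoref{lem:sequential-updating}, which expresses the augmented posterior as a reweighting of the calibration posterior:
\[
\pi_{n,c}(\vtheta)=\frac{p_\vtheta(c\mid\vx_{n+1})\,\pi_n(\vtheta)}{w_{n,c}},
\]
with $w_{n,c}>0$. Integrating $h$ against this density gives
\[
\E_{\vtheta\sim\pi_{n,c}}[h(\vtheta)]=\frac{\E_{\vtheta\sim\pi_n}\bigl[h(\vtheta)\,p_\vtheta(c\mid\vx_{n+1})\bigr]}{w_{n,c}}.
\]
This is the same manipulation used in the proof of \autoref{lem:formability-variance}, but with a general $h$ in place of $p_\vtheta(c\mid\vx_{n+1})$.

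Next, expand the product expectation using the covariance identity
\[
\E_{\pi_n}[hp]=\Cov_{\pi_n}(h,p)+\E_{\pi_n}[h]\,\E_{\pi_n}[p],
\]
where $p=p_\vtheta(c\mid\vx_{n+1})$ and $\E_{\pi_n}[p]=w_{n,c}$ by \autoref{eq:candidate-mass}. Dividing by $w_{n,c}$, the second term becomes exactly $\E_{\pi_n}[h]$ and the first becomes $\Cov_{\pi_n}(h,p)/w_{n,c}$. This is \autoref{eq:candidate-conditioning}.

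The only point needing care is integrability, that is, that all the quantities above are finite. Since $0\le p_\vtheta(c\mid\vx_{n+1})\le 1$, we have $|hp|\le|h|$. So $\E_{\pi_n}|hp|<\infty$ follows from the hypothesis $\E_{\pi_n}|h|<\infty$, which makes $\E_{\pi_{n,c}}[h]$ finite. The covariance $\E[hp]-\E[h]\E[p]$ is then a difference of finite numbers, so it is well defined even without assuming a second moment of $h$. I would note this explicitly, because the covariance is defined in this product form rather than through second moments.

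There is no real obstacle here; the argument is a two-line change of measure. As a sanity check, taking $h=p_\vtheta(c\mid\vx_{n+1})$ should recover \autoref{lem:formability-variance}, with formability equal to variance divided by $w_{n,c}$. Taking $h=p_\vtheta(y_i\mid\vx_i)$ should give part (i) of \autoref{thm:full-split-decomposition}, the calibration-score shift $C_{i,c}/w_{n,c}$.
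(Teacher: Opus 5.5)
Your proof is correct and matches the paper's argument: reweight by the density from \autoref{lem:sequential-updating}, then write $\E_{\pi_n}[h\,p]$ as $\E_{\pi_n}[h]\,w_{n,c}$ plus the covariance and divide by $w_{n,c}$. Your integrability remark is a small but welcome addition that the paper leaves implicit: $|h\,p|\le|h|$ because $0\le p\le 1$, so the covariance is well defined in product form without second moments of $h$.
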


\begin{proof}
Using \autoref{eq:appendix-candidate-posterior},
\begin{align*}
    \E_{\vtheta\sim\pi_{n,c}}[h(\vtheta)]
    =\int h(\vtheta)\frac{p_\vtheta(c\mid \vx_{n+1})\pi_n(\vtheta)}{w_{n,c}}\,d\vtheta
    =\frac{\E_{\vtheta\sim\pi_n}\bigl[h(\vtheta)\,p_\vtheta(c\mid \vx_{n+1})\bigr]}{w_{n,c}}.
\end{align*}
By the definition of covariance,
\begin{align*}
    \E_{\vtheta\sim\pi_n}\bigl[h(\vtheta)\,p_\vtheta(c\mid \vx_{n+1})\bigr]
    &=\E_{\vtheta\sim\pi_n}[h(\vtheta)]\,
    \E_{\vtheta\sim\pi_n}[p_\vtheta(c\mid \vx_{n+1})]\\
    &\quad+\Cov_{\vtheta\sim\pi_n}\bigl(h(\vtheta),\,p_\vtheta(c\mid \vx_{n+1})\bigr),
\end{align*}
and $\E_{\vtheta\sim\pi_n}[p_\vtheta(c\mid \vx_{n+1})]=w_{n,c}$. Dividing by $w_{n,c}$ completes the proof.
\end{proof}

Note that \autoref{eq:variance-from-formability} in \autoref{lem:formability-variance} is the special case $h(\vtheta)=p_\vtheta(c\mid \vx_{n+1})$, since the covariance of a variable with itself is its variance.

\begin{lemma}[Quantile stability under score shifts]
\label{lem:quantile-shift}
The map $r_{1:n}\mapsto Q_{\alpha,n}(r_{1:n})$ defined in \autoref{eq:lower-conformal-quantile} is non-decreasing in every coordinate, and translation equivariant whenever the quantile is finite, i.e., $Q_{\alpha,n}(r_{1:n}+t\1)=Q_{\alpha,n}(r_{1:n})+t$ for every $t\in\R$. For any shift vector $\delta_{1:n}$,
\begin{align}
    Q_{\alpha,n}(r_{1:n})+\min_i\delta_i
    \;\leq\;Q_{\alpha,n}(r_{1:n}+\delta_{1:n})
    \;\leq\;Q_{\alpha,n}(r_{1:n})+\max_i\delta_i.
    \label{eq:quantile-sandwich}
\end{align}
\end{lemma}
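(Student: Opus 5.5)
The plan is to work directly from the definition in \autoref{eq:lower-conformal-quantile}. There $Q_{\alpha,n}(r_{1:n})$ is a fixed order statistic of the augmented multiset $\{r_1,\ldots,r_n,\infty\}$. For the lower quantile of the empirical measure with atoms of mass $1/(n+1)$, the value is $r_{(k)}$, the $k$-th smallest, for an index $k=k(\alpha,n)$ that depends only on $\alpha$ and $n$ and not on the values $r_i$. When $k$ exceeds $n$ the quantile is $\infty$.

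The first step is to make this representation explicit. By definition $\sQ_\alpha(\mu)=\inf\{t:\mu((-\infty,t])\geq\alpha\}$ (or the analogous upper-tail convention; either convention gives the same structure). The cumulative mass $\mu((-\infty,t])$ equals $\#\{i: r_i\le t\}/(n+1)$. This count jumps only at the data points, so the infimum is attained at $r_{(k)}$ with $k=\lceil \alpha(n+1)\rceil$, or the corresponding index under the other convention. The only point that needs care is verifying that $k$ is value-independent and handling ties. Ties are harmless because order statistics are well defined with multiplicity. I expect the bookkeeping of which tail convention the paper uses to be the main nuisance rather than a real obstacle.

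Next, both properties follow from elementary facts about order statistics. For \emph{monotonicity}, if $r_i\le r'_i$ for all $i$, then $\#\{i:r'_i\le t\}\le\#\{i:r_i\le t\}$ for every $t$, so the set of admissible $t$ shrinks and the infimum can only increase; that is, $r_{(k)}\le r'_{(k)}$. Monotonicity in each coordinate separately is the special case where only one coordinate changes. For \emph{translation equivariance}, adding $t\1$ preserves the ordering, so the $k$-th order statistic shifts by exactly $t$ whenever it is finite. The $\infty$ atom stays at $\infty$, which is why finiteness is required.

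Finally, for the sandwich \autoref{eq:quantile-sandwich} I combine the two properties. Coordinatewise we have $r_{1:n}+(\min_j\delta_j)\1\le r_{1:n}+\delta_{1:n}\le r_{1:n}+(\max_j\delta_j)\1$. Monotonicity yields $Q_{\alpha,n}(r_{1:n}+\min_j\delta_j\1)\le Q_{\alpha,n}(r_{1:n}+\delta_{1:n})\le Q_{\alpha,n}(r_{1:n}+\max_j\delta_j\1)$, and translation equivariance rewrites the outer terms as $Q_{\alpha,n}(r_{1:n})+\min_i\delta_i$ and $Q_{\alpha,n}(r_{1:n})+\max_i\delta_i$. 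In the infinite case all three quantities equal $\infty$, so the inequality holds trivially with the convention $\infty+a=\infty$.
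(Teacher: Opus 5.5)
Your proposal is correct and follows the paper's proof. Both write $Q_{\alpha,n}$ as a value-independent order statistic of $\{r_1,\dots,r_n,\infty\}$, derive monotonicity and translation equivariance from that, and combine them via the coordinatewise bounds $r_i+\min_j\delta_j\le r_i+\delta_i\le r_i+\max_j\delta_j$. Your explicit index $k=\lceil\alpha(n+1)\rceil$, the counting argument for monotonicity, and the treatment of the infinite case (which the paper simply excludes by assumption) are harmless refinements of the same argument.
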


\begin{proof}
$Q_{\alpha,n}(r_{1:n})$ is the $\alpha$-quantile of the empirical distribution over the multiset $\{r_1,\dots,r_n,\infty\}$, and hence returns the $k$-th smallest element of this multiset, where the rank $k$ is fixed by $\alpha$ and $n$ (with $Q_{\alpha,n}=\infty$ when $k=n+1$; we assume $\alpha$ and $n$ are such that the quantile is finite). Increasing any coordinate cannot decrease any order statistic, which gives monotonicity, and adding the same $t$ to every score adds $t$ to every finite order statistic, which gives translation equivariance. For the sandwich, note that coordinatewise
$r_i+\min_j\delta_j\leq r_i+\delta_i\leq r_i+\max_j\delta_j${. Applying} monotonicity and then translation equivariance to both sides yields \autoref{eq:quantile-sandwich}.
\end{proof}

\fullsplitdecomposition*

\begin{proof}
\textbf{Part (i).} Under Bayesian training, the calibration scores are posterior predictive probabilities: $r_i^{(n)}=p_{\pi_n}(y_i\mid \vx_i)=\E_{\vtheta\sim\pi_n}[p_\vtheta(y_i\mid \vx_i)]$ and $r_i^{(c)}=p_{\pi_{n,c}}(y_i\mid \vx_i)=\E_{\vtheta\sim\pi_{n,c}}[p_\vtheta(y_i\mid \vx_i)]$. Applying \autoref{lem:candidate-conditioning} with $h(\vtheta)=p_\vtheta(y_i\mid \vx_i)$ gives
\begin{align*}
    r_i^{(c)}-r_i^{(n)}
    =\frac{\Cov_{\vtheta\sim\pi_n}\bigl(p_\vtheta(y_i\mid \vx_i),\,p_\vtheta(c\mid \vx_{n+1})\bigr)}{w_{n,c}}
    =\frac{C_{i,c}}{w_{n,c}}.
\end{align*}
The bound $|C_{i,c}|\leq\sqrt{V_i V_{n,c}}$ is the Cauchy--Schwarz inequality for the covariance.

\textbf{Part (ii).} We first bound the threshold drift. By definition, $T_{n,c}=q_\vtheta^{(c)}-q_\vtheta^{(n)}$ with $q_\vtheta^{(c)}=Q_{\alpha,n}(r_{1:n}^{(c)})$ and $q_\vtheta^{(n)}=Q_{\alpha,n}(r_{1:n}^{(n)})$. Defining $\delta_i$ as the difference in the scores {after the update}, we have $r_{1:n}^{(c)}=r_{1:n}^{(n)}+\delta_{1:n}$ with $\delta_i=C_{i,c}/w_{n,c}$, so \autoref{lem:quantile-shift} gives
\begin{align}
    \frac{\min_i C_{i,c}}{w_{n,c}}\leq T_{n,c}\leq\frac{\max_i C_{i,c}}{w_{n,c}},
    \qquad\text{hence}\qquad
    |T_{n,c}|\leq\frac{\max_i|C_{i,c}|}{w_{n,c}}\leq\frac{\sqrt{V_{n,c}}\max_i\sqrt{V_i}}{w_{n,c}},
    \label{eq:threshold-drift-sandwich}
\end{align}
where the last step is the Cauchy--Schwarz bound of part (i).

Next, \autoref{eq:margin-decomposition} is an algebraic identity that holds for any three fits:
$m_{n,c}=m_0+A_{n,c}+\Phi_{n,c}-T_{n,c}$. Under Bayesian refitting where the formability is defined as \autoref{eq:formability}, $\Phi_{n,c}=p_{\pi_{n,c}}(c\mid \vx_{n+1})-p_{\pi_n}(c\mid \vx_{n+1})$, we write $\Phi_{n, c} - T_{n, c} = \frac{V_{n,c}}{w_{n,c}} - T_{n, c}$ (given $\Phi_{n,c}=V_{n,c}/w_{n,c}$). Therefore, $m_{n,c}=m_0+A_{n,c}+V_{n,c}/w_{n,c}-T_{n,c}$. By \autoref{eq:threshold-drift-sandwich}, we have
\begin{align*}
    \frac{V_{n,c}}{w_{n,c}}-T_{n,c}
    &\in\left[\frac{V_{n,c}-\sqrt{V_{n,c}}\max_i\sqrt{V_i}}{w_{n,c}},\;
    \frac{V_{n,c}+\sqrt{V_{n,c}}\max_i\sqrt{V_i}}{w_{n,c}}\right]\\
    &=\left[\frac{\sqrt{V_{n,c}}\bigl(\sqrt{V_{n,c}}\pm\max_i\sqrt{V_i}\bigr)}{w_{n,c}}\right]
    =\bar\Phi_{n,c},
\end{align*}
which proves \autoref{eq:bayesian-margin-decomposition}.
\end{proof}

\subsection{Proof of \autoref{prop:laplace-blind}}

\laplaceblind*

\begin{proof}
(i) On $U$, $\pi_\mathrm{res}=0$, so $\pi_n'=(1-\delta)\pi_n\leq(1-\delta)\pi_n(\hat\vtheta_n)=\pi_n'(\hat\vtheta_n)$, which means that $\pi_n'$ is maximized at the same points as $\pi_n$ in $U$. Outside $U$,
\begin{align*}
    \pi_n'(\vtheta)\leq(1-\delta)\bigl(\pi_n(\hat\vtheta_n)-\eta\bigr)+\delta\|\pi_\mathrm{res}\|_\infty<(1-\delta)\,\pi_n(\hat\vtheta_n)=\pi_n'(\hat\vtheta_n).
\end{align*}
Hence, the maximizers of $\pi_n'$, and $\pi_n$ are {the same} in $U$, and $\hat\vtheta_n$ is one of them. Since $\log\pi_n'=\log(1-\delta)+\log\pi_n$ on the open set $U$ (meaning that the distribution is a scaled {version of $\pi_n$} in that region), the two log-densities have the same Hessian at $\hat\vtheta_n$, hence the same $\hat\mSigma_n$ (the $\log(1-\delta)$ disappears {when differentiating}). The gradient $\nabla_\vtheta p_{\hat\vtheta_n}(c\mid\xtest)$ depends only on the model and on $\hat\vtheta_n$. Therefore $\hat V^{\mathrm L}(\pi_n')=\hat V^{\mathrm L}(\pi_n)$.

(ii) Write $f(\vtheta):=p_\vtheta(c\mid\xtest)$. Expectations are linear in the density, so
\begin{align*}
    \E_{\pi_n'}[f]=(1-\delta)\,w_{n,c}+\delta\,w_{\mathrm{res},c},
    \qquad
    \E_{\pi_n'}[f^2]=(1-\delta)\bigl(V_{n,c}(\pi_n)+w_{n,c}^2\bigr)+\delta\bigl(V_{\mathrm{res},c}+w_{\mathrm{res},c}^2\bigr).
\end{align*}
Subtracting the square of the first from the second,
\begin{align*}
    V_{n,c}(\pi_n')
    &=(1-\delta)\,V_{n,c}(\pi_n)+\delta\,V_{\mathrm{res},c}+(1-\delta)\,w_{n,c}^2+\delta\,w_{\mathrm{res},c}^2-\bigl((1-\delta)\,w_{n,c}+\delta\,w_{\mathrm{res},c}\bigr)^2, \\
    &=(1-\delta)\,V_{n,c}(\pi_n)+\delta\,V_{\mathrm{res},c}+\delta(1-\delta)(w_{\mathrm{res},c}-w_{n,c})^2.
\end{align*}
where the last three terms equal $\delta(1-\delta)\,w_{n,c}^2+\delta(1-\delta)\,w_{\mathrm{res},c}^2-2\delta(1-\delta)\,w_{n,c}w_{\mathrm{res},c}=\delta(1-\delta)(w_{\mathrm{res},c}-w_{n,c})^2$. Subtracting $V_{n,c}(\pi_n)$ from both sides results in \autoref{eq:laplace-blind}, and the inequality follows from $V_{\mathrm{res},c}\geq0$.
\end{proof}

\subsection{Proof of \autoref{prop:map-candidate-selection}}
\mapcandidateselection*

\begin{proof}
Adding the candidate $(\xtest,c)$ to the calibration set adds $\log p_\theta(c\mid\xtest)$ to the log posterior. Thus the augmented MAP $\hat\theta_{n,c}$ globally maximizes $\gL(\theta)+\log p_\theta(c\mid\xtest)$.
With $\gL(\theta')\geq\gL(\hat\theta_n)-r$, and
    $\log p_{\theta'}(c\mid\xtest)\geq\log v${,} we have:
\begin{align*}
    \gL(\hat\theta_{n,c})+\log p_{\hat\theta_{n,c}}(c\mid\xtest)
    &\geq \gL(\theta')+\log p_{\theta'}(c\mid\xtest)\\
    &\geq \gL(\hat\theta_n)-r+\log v.
\end{align*}
Since $\hat\theta_n$ maximizes the calibration log posterior, $\gL(\hat\theta_{n,c})\leq\gL(\hat\theta_n)$. Subtracting $\gL(\hat\theta_{n,c})$ from the preceding inequality therefore yields $\log p_{\hat\theta_{n,c}}(c\mid\xtest)\geq\log v-r$, or $p_{\hat\theta_{n,c}}(c\mid\xtest)\geq e^{-r}v$. This bound holds whether or not $\log(v/u)>r$. If that condition holds, then $e^{-r}v>u$, whereas every model in $\gB_0$ assigns $c$ probability at most $u$. Hence $\hat\theta_{n,c}\notin\gB_0$.
\end{proof}

\section{Supplementary Results, and Experiments}
\label{app:supplementary-results}

\begin{figure}
    \centering
    \includegraphics[scale=0.9]{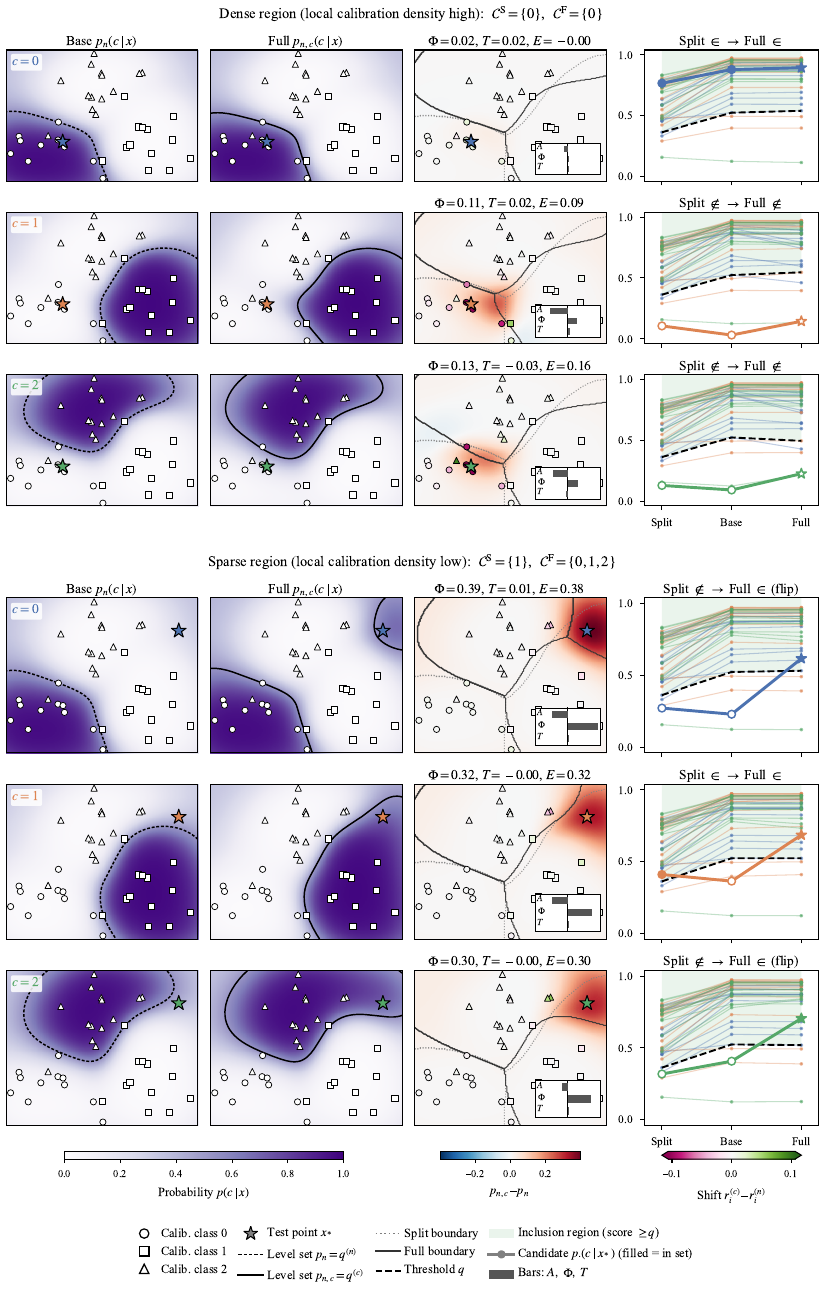}
    \caption{Complete anatomy of the split-to-full margin.}
    \label{fig:score-anatomy-appendix}
\end{figure}

\subsection{Synthetic Experiments}
\label{app:synthetic-experiments}

\subsubsection{Anatomy of the Split-to-Full CP}
\label{app:score-anatomy}

\begin{figure}[t]
    \centering
    \includegraphics{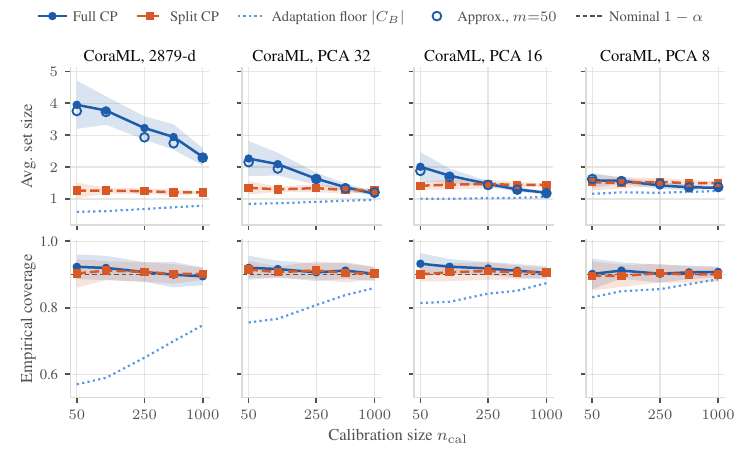}
    \caption{Full (blue, solid) vs.\ split (orange, dashed) CP on \texttt{CoraML} (transductive node-classification, GCN) over the calibration set size $n_{\mathrm{cal}}$, with the $2879$-d bag-of-words features [Left], and their PCA reduction to $K=32$, $16$, and $8$ [Middle Left to Right]. [Top] Average set size against the adaptation floor $|\gC_B|$ (dotted). Hollow markers are the approximate full CP with $m=50$. [Bottom] Empirical coverage of the same sets against the nominal $1-\alpha=0.9$ (black, dashed). Bands are $\pm1$ standard deviation across calibration splits.}
    \label{fig:full-vs-split-coraml}
\end{figure}

\begin{figure}[t]
    \centering
    \includegraphics{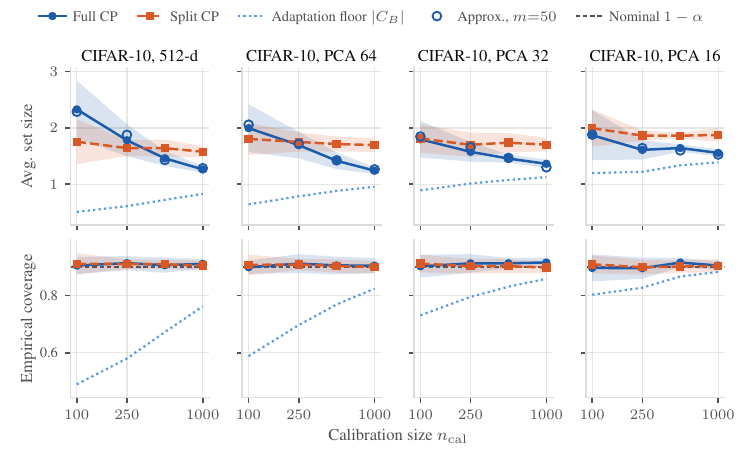}
    \caption{Full (blue, solid) vs.\ split (orange, dashed) CP on \texttt{CIFAR-10} (two-layer MLP over a frozen ResNet-18) over the calibration set size $n_{\mathrm{cal}}$, with the $512$-d embedding [Left], and its PCA reduction to $K=64$, $32$, and $16$ [Middle Left to Right]. [Top] Average set size against the adaptation floor $|\gC_B|$ (dotted). Hollow markers are the approximate full CP with $m=50$. [Bottom] Empirical coverage of the same sets against the nominal $1-\alpha=0.9$ (black, dashed). Bands are $\pm1$ standard deviation across calibration splits.}
    \label{fig:full-vs-split-cifar10}
\end{figure}

This experiment replicates the transition from split to full conformal prediction in a two-dimensional setup where every quantity of \autoref{eq:margin-decomposition} is directly visible. \autoref{fig:score-anatomy} in \autoref{sec:motivating-experiment} shows a compact presentation of the result (one correct, and one wrong candidate per test point). \autoref{fig:score-anatomy-appendix} reports the complete anatomy for every candidate label at both test points, in addition to the score function at each stage.

\parbold{Setup.} We sample three classes from overlapping Gaussians (means on a triangle with pairwise distance $4$, standard deviation $1.5$). The model is a multinomial logistic regression over fixed RBF features, 
so that every refit is an exact convex MAP problem. The pretrained model $\vtheta_0$ is trained on $3$ points per class (with a prior centered at zero), and every fine-tuning minimizes the cross-entropy plus the $L_2$ pull $\frac{1}{2\tau^2}\|\vtheta-\vtheta_0\|^2$ with $\tau^2=1$; i.e. the MAP objective under the prior $\gN(\vtheta_0,\tau^2\mI)$. The calibration set has $n=42$ points, the score is the predictive probability, and $1-\alpha=0.9$. We fine-tune once on $\dcal$ (the base model $p_n$), and once per candidate on $\daugmented$ (the full models $p_{n,c}$). We {examine} two test points that differ only in their local calibration density: one is placed inside the class-$0$ cluster, the other away from every calibration point.

\parbold{Description for \autoref{fig:score-anatomy-appendix}.} Rows iterate over the candidate labels $c\in\{0,1,2\}$. {There are two} blocks, one for the test point {in the dense} area (top), and the other (bottom) the sparse one. Columns from left to right are \begin{enumerate*}[label=(\roman*)]
    \item the base predictive field $p_n(c\mid\vx)$ together with its level set at the base threshold $q_\vtheta^{(n)}$ (dashed): the region inside the {contour} is where the label $c$ would enter the prediction set under the base model.
    \item The same field after fine-tuning on the pseudo-label, $p_{n,c}(c\mid\vx)$, with the level set at $q_\vtheta^{(n,c)}$ (solid).
    \item Deformation $p_{n,c}(c\mid\vx)-p_n(c\mid\vx)$. The calibration markers are colored by the shift of their own score, $r_i^{(c)}-r_i^{(n)}=C_{i,c}/w_{n,c}$ from \autoref{thm:full-split-decomposition}. The figure also shows the quantities $A_{n,c}$, $\Phi_{n,c}$, and $T_{n,c}$ on one scale shared across all plots.
    \item Shows every score through the three regimes (split, base, full): thin lines are the calibration scores, the dashed line is the threshold, the shaded area is where the label is included in the set, and the bold line is the candidate itself, with filled markers indicating membership at that stage.
\end{enumerate*}

{As expected}, at the dense region the self-pull is resisted by the surrounding calibration points (and the training) with $\Phi_{n,c}\leq0.13$ for the wrong candidates. In other words, the deformation is small, and sharply localized. The threshold also barely moves, and the two {level sets remain} relatively steady. {Conversely}, at the sparse test point where there is much less resistance from other points, {fine-tuning} on \emph{any} pseudo-label bends the model freely ($\Phi_{n,c}\approx0.3$--$0.4$ for every candidate), no calibration score reacts ($C_{i,c}\approx0$, hence $T_{n,c}\approx0$), and as a result, every candidate -- the wrong ones included -- flips into the full set. The {takeaway} is that the full CP loses exactly where the formability $\Phi_{n,c}=V_{n,c}/w_{n,c}$ is large.

\subsubsection{Aleatoric against Epistemic Uncertainty}
\label{app:epistemic-sweeps}

\autoref{fig:epistemic-sweeps} in \autoref{sec:motivating-experiment} reads two uncertainty measures on the setup of \autoref{app:score-anatomy} while the aleatoric part of the uncertainty is moved on purpose. The generating law is known, so the aleatoric part is known in closed form: with $\vmu_k$ the class means (at pairwise distance $d$), and the common standard deviation $1.5$, the Bayes posterior at $\vx$ is $\eta_k(\vx)\propto\exp\bigl(-\|\vx-\vmu_k\|^2/(2\cdot1.5^2)\bigr)$, and we report $1-\frac{K}{K-1}\mathrm{TV}\bigl(\eta(\vx),\tfrac1K\bigr)$, with $\mathrm{TV}$ the total variation distance to the uniform law over the $K=3$ classes. It is $1$ where the label is a coin flip among the three classes, $1/2$ where it is a coin flip between two of them, $0$ where it is determined, and it depends neither on the model nor on the sample. The two measures are read from the model fine-tuned on $\dcal$: the normalized softmax entropy $H(p_n(\cdot\mid\xtest))/\log K$, and the formability variance $\sum_c\hat V_{n,c}=\sum_c\hat w_{n,c}\hat\Phi_{n,c}$ of \autoref{eq:total-variance-from-formability}, with $\hat w_{n,c}=p_n(c\mid\xtest)$, and $\hat\Phi_{n,c}=p_{n,c}(c\mid\xtest)-p_n(c\mid\xtest)$ from one MAP refit per pseudo-label, exactly as in \autoref{app:score-anatomy}.

\parbold{Setup.} Model, pretraining ($3$ points per class), calibration ($n=42$), prior, and $\alpha$ are those of \autoref{app:score-anatomy}. The grid of RBF centers is the one of that experiment extended by the same spacing (to $15\times14$ centers), so that it covers the wider configurations, and the longer paths below. The data are drawn once as standard normal offsets $\vz_i$, and placed as $\vx_i=\vmu_{y_i}(d)+1.5\,\vz_i$, so that the sample moves continuously with $d$. The means $\vmu_k(d)$ are the means of \autoref{app:score-anatomy} scaled about their centroid $\bar\vmu$ to pairwise distance $d$ ($d=4$ is the configuration of \autoref{fig:score-anatomy}), and the pretrained model, and $p_n$ are refitted at every $d$. In \autoref{fig:epistemic-sweeps}(b), $d$ runs from $8$ to $0$, and $\xtest=\bar\vmu$. In (c), and (d), $d=4$, and $\xtest(t)=\bar\vmu+t\,\|\vmu_0-\bar\vmu\|\,\vu$ for $t\in[0,3.5]$ along a unit direction $\vu$: in (c) towards the center of class $0$, which the path reaches at $t=1$; in (d) towards the midpoint $\bar\vmu_{12}$ of classes $1$, and $2$, which it reaches at $t=0.5$. The path of (d) lies, up to $0.01$, on the perpendicular bisector of $\vmu_1$, and $\vmu_2$, so the Bayes posterior stays a two-way coin flip between the two classes along the rest of it. Every curve is the mean over $5$ draws of the data, and the band is their range. The aleatoric reference does not depend on the draw.

\parbold{Reading.} In (b), the label at $\xtest$ is a three-way coin flip at every $d$ (the centroid is, up to $0.02$, equidistant from the three means), so the aleatoric reference is at least $0.976$ throughout. The softmax entropy reads this: its mean stays between $0.82$ and $0.92$ (the range over the draws is $0.59$--$0.98$). The labeled points, however, pile up around $\xtest$ as the classes converge -- the Gaussian kernel density of $\dcal$ at $\xtest$ (bandwidth $1.1$) grows from $0.002$ to $0.046$ -- and the formability variance reads this: it falls in every draw, from $0.276$ to $0.061$ on average ($\sum_c\hat\Phi_{n,c}$ from $0.94$ to $0.20$). The two measures move independently: the entropy is flat, while the formability variance falls by a factor of $4.5$. In (c), the aleatoric reference falls from $0.994$ to $0.079$ at $\vmu_0$, and to $0.002$ at $t=2$. Both measures fall towards $\vmu_0$, the entropy from $0.92$ to $0.32$, and the formability variance from $0.166$ to $0.024$, as they must: with $p_n(0\mid\xtest)$ near one every plausible model agrees, and $V_{n,c}\leq w_{n,c}(1-w_{n,c})$ (\autoref{lem:formability-variance}) leaves no room for disagreement. Both reach their minimum at $t=1.4$ ($0.22$, and $0.012$), where the labeled points start to thin out (the density is $0.018$ there, $0.009$ at $t=2$, $0.003$ at $t=2.5$, and below $10^{-3}$ from $t=3$ on), and then both rise: the formability variance to $0.37$, since a model that no data resist is free to deform towards any label ($\sum_c\hat\Phi_{n,c}=1.1$; every refit lifts its pseudo-label to about $0.7$), and the entropy to $0.99$, since the RBF logits decay to the bias away from the data, and the softmax returns to the near-uniform bias. The entropy therefore reads the same value, its maximum, at the centroid and at the far end of the path, where the aleatoric reference is $0.994$, and $0$: the first reading is aleatoric, the second is entirely epistemic, and the entropy cannot tell the two apart. In (d), the aleatoric reference falls from $0.994$ to $0.62$ at the midpoint $\bar\vmu_{12}$, and then settles at $0.50$: the label remains a coin flip between classes $1$, and $2$ along the rest of the path. The entropy reads $0.74$ at the midpoint, its minimum $0.69$ at $t=0.9$, and then climbs back to $0.99$. The formability variance reads $0.12$ at the midpoint, and rises to $0.37$ as the density falls from $0.014$ to $0$. Between $t=1$ and $t=3.5$ the aleatoric part is constant, so the rise of the entropy from $0.69$ to $0.99$ is entirely epistemic; the entropy does not say so, and the formability variance does. Wherever the entropy is high, then, the formability variance separates the two cases: it is small where the entropy is high because the labels are mixed and the labeled points are many (the centroid at small $d$, the neighborhood of $\bar\vmu_{12}$), and large where the entropy is high because nothing constrains the model (the centroid at $d=8$, the far ends of both paths).

\subsubsection{Exact Evaluation on Bayesian Linear Regression}
\label{app:blr-check}
The margin decomposition for full - split CP in \autoref{sec:theory} is general. {However, using} the Bayesian learning regime we can show that $\Phi_{n, c}$ is a function of epistemic uncertainty. {Such a setup} is unrealistic for large-scale deep learning, for which we discuss the same concept under deterministic fine-tuning in \autoref{sec:map-refits}. Meanwhile, the Bayesian setup is exactly realizable in a small Gaussian-linear model, where every quantity of \autoref{sec:theory} can be evaluated in closed form. 
The synthetic dataset is drawn from $f^\star(x)=\sin(1.5x)+0.3x$ plus noise: $40$ pretraining points uniform on $[-4,-0.5]$, {and the calibration set is drawn} from the mixture $x \sim \tfrac34\gN(0,0.5^2)+\tfrac14\,\mathrm{U}[-4,4]$, therefore its density is high around $x=0$ and low elsewhere ($n=60$).
The model is $y=\vtheta^\top\vpsi(x)+\varepsilon$ with $\varepsilon\sim\gN(0,\sigma^2)$, $\sigma=0.5$, and $\vpsi(x)\in\R^{13}$ a bias plus $12$ Gaussian bumps of width $0.7$ centered on a grid over $[-4,4]$. The score is the likelihood $p_\vtheta(y\mid x)=\gN(y;\vtheta^\top\vpsi(x),\sigma^2)$, which takes values in $[0,\,1/(\sigma\sqrt{2\pi})]=[0,0.8]$. The bound $V_{n,c}\leq\Phi_{n,c}(1-\Phi_{n,c})$ of \autoref{lem:formability-variance} therefore applies to it as it does to a class probability, and the candidates $c$ range over a grid on $[-3,3]$.  The pretrained model $\vtheta_0$ is trained with prior $\gN(\vzero,\mI)$, and $1-\alpha=0.9$.

\parbold{Closed forms.} For $\vtheta\sim\gN(\vmu,\mS)$ write $m(x):=\vmu^\top\vpsi(x)$ and $s^2(x):=\vpsi(x)^\top\mS\vpsi(x)$. The calibration posterior is $\pi_n=\gN(\vmu_n,\mS_n)$ with $\mS_n^{-1}=\mI+\sigma^{-2}\sum_{i=1}^n\vpsi(x_i)\vpsi(x_i)^\top$ and $\vmu_n=\mS_n\bigl(\vtheta_0+\sigma^{-2}\sum_{i=1}^ny_i\vpsi(x_i)\bigr)$, and $\pi_{n,c}$ is the same update with the pair $(\xtest,c)$ added. This update is the refit that produces $m_{n,c}$, $\Phi_{n,c}$, and $T_{n,c}$ in \autoref{fig:section3}. The moments that enter \autoref{lem:formability-variance} and \autoref{thm:full-split-decomposition} are, with $\mPsi:=[\vpsi(x),\vpsi(x')]$,
\begin{align*}
    \E_{\vtheta}[p_\vtheta(y\mid x)]&=\gN\bigl(y;\,m(x),\,\sigma^2+s^2(x)\bigr),\\
    \E_{\vtheta}[p_\vtheta(y\mid x)^2]&=\frac{1}{2\sigma\sqrt\pi}\,\gN\bigl(y;\,m(x),\,\sigma^2/2+s^2(x)\bigr),\\
    \E_{\vtheta}[p_\vtheta(y\mid x)\,p_\vtheta(y'\mid x')]&=\gN\bigl((y,y');\,(m(x),m(x')),\,\sigma^2\mI+\mPsi^\top\mS\mPsi\bigr),
\end{align*}
{which are used} to compute the scores $r_i$ and the candidate mass $w_{n,c}$, the variances $V_i$ and $V_{n,c}$, and the forming kernel $C_{i,c}$ of \autoref{eq:formability-kernel}. Each identity of \autoref{sec:theory} is then compared against the quantity it predicts, computed independently from the refit $\pi_{n,c}$. 

\parbold{Conclusion.} Our sanity check (which is also shown in \autoref{fig:section3}) succeeds as over all $481\times3$ candidate--test pairs, the two independent computations are close up to the numerical instability; i.e. $\max_{i,c}|r_i^{(c)}-r_i^{(n)}-C_{i,c}/w_{n,c}|=7\cdot10^{-15}$, $\max_c|V_{n,c}-w_{n,c}\Phi_{n,c}|=3\cdot10^{-16}$, $|C_{i,c}|\leq\sqrt{V_iV_{n,c}}$ throughout, and $m_{n,c}$ lies inside the band of \autoref{eq:bayesian-margin-decomposition} with a smallest slack of $6.5\cdot10^{-3}$ (inside the kernel band of \autoref{eq:threshold-drift-sandwich}, $2.4\cdot10^{-4}$). The formability is the dominant term wherever the neighborhood is sparse ($\max_c\Phi_{n,c}=0.24$ against $\max_c|T_{n,c}|=0.03$ at the sparse test point), which is why the full set inflates there and not at the dense point, where $\max_c\Phi_{n,c}=0.02$. 

\subsubsection{The Frozen Image Backbones}
\label{app:backbones}

Every image experiment in this paper trains a small head on top of a frozen \texttt{ImageNet}{ \citep{deng2009imagenet}}-pretrained backbone, and the feature space from which the {formability} is defined is the output of this backbone (this {especially} matters for CIFAR-10 experiments with feature reduction). The \texttt{ImageNet} output layer is removed, the remaining network embeds each image once, and only the head is trained, or refitted. The backbone itself is never updated, which helps us to reduce the cost of per-candidate refit from the entire model to only the head. \autoref{tab:backbones} {lists} the specifications{ of the ResNet \citep{he2016deep}, ViT \citep{dosovitskiy2021image}, and ConvNeXt \citep{liu2022convnet} backbones. ViT-H/14 uses the SWAG weights of \citet{singh2022revisiting}}.

\begin{figure}[t]
    \centering
    \includegraphics{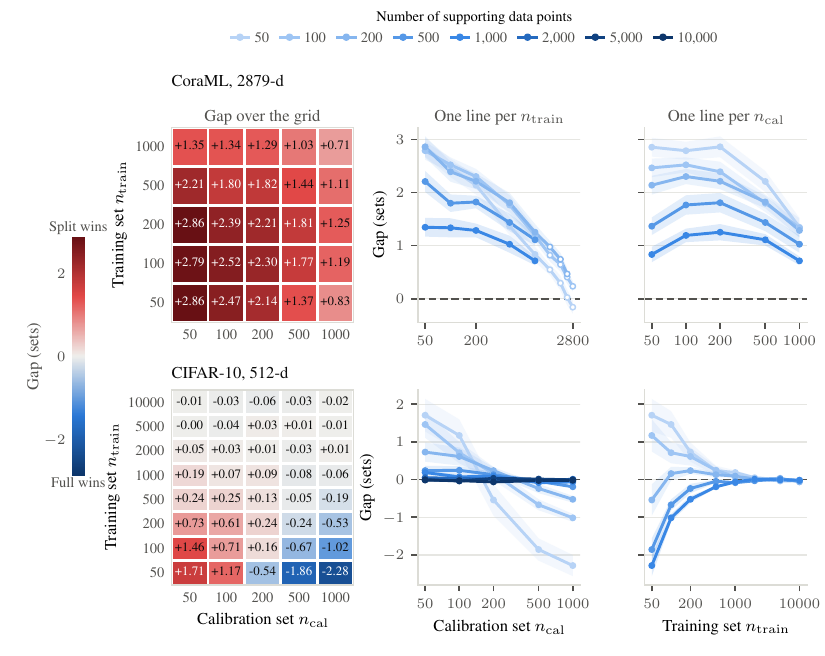}
    \caption{The gap $|\gC^F|-|\gC^S|$ in average set size over the training set size $n_{\mathrm{train}}$, and the calibration set size $n_{\mathrm{cal}}$ (positive: split CP is smaller, negative: full CP is smaller), on \texttt{CoraML} with $2879$-d features [Top], and \texttt{CIFAR-10} with the full $512$-d embedding [Bottom]. [Left] The gap over the grid. [Middle] The gap against $n_{\mathrm{cal}}$, one line per $n_{\mathrm{train}}$. Hollow markers are the \texttt{CoraML} extension arm up to $n_{\mathrm{cal}}=2800$. [Right] The gap against $n_{\mathrm{train}}$, one line per $n_{\mathrm{cal}}$. In the line panels, the shade encodes the size of the other set (light to dark = small to large), and bands are $\pm1$ standard error of the paired difference across calibration splits. $1-\alpha=0.9$.}
    \label{fig:full-vs-split-train-calib}
\end{figure}

\begin{table}[t]
    \centering
    \caption{The frozen \texttt{ImageNet}-pretrained backbones over which the image experiments are run. \emph{Weights} is the \texttt{torchvision} checkpoint. \emph{Original} counts the parameters of the classifier as loaded, and \emph{fixed} the ones that remain once its \texttt{ImageNet} output layer is removed, which is the form in which we load it. $d$ is the dimension of the embedding the head sees, and it is also the dimension the PCA reduction of \autoref{app:full-vs-split-pca} starts from. The top-1 accuracy is the reference number reported for the same checkpoint on \texttt{ImageNet}-1k.}
    \label{tab:backbones}
    \setlength{\tabcolsep}{5pt}
    \begin{tabular}{llrrrr}
        \toprule
        & & \multicolumn{2}{c}{Parameters} & Feature & \texttt{ImageNet}-1k \\
        \cmidrule(lr){3-4}
        Backbone & Weights & Original & Fixed & dim.\ $d$ & top-1 (\%) \\
        \midrule
        ResNet-18   & \texttt{IMAGENET1K\_V1}            & $11.7$M  & $11.2$M  & $512$  & $69.8$ \\
        ResNet-50   & \texttt{IMAGENET1K\_V2}            & $25.6$M  & $23.5$M  & $2048$ & $80.9$ \\
        ViT-B/16    & \texttt{IMAGENET1K\_V1}            & $86.6$M  & $85.8$M  & $768$  & $81.1$ \\
        ConvNeXt-B  & \texttt{IMAGENET1K\_V1}            & $88.6$M  & $87.6$M  & $1024$ & $84.1$ \\
        ViT-H/14    & \texttt{IMAGENET1K\_SWAG\_E2E\_V1} & $633.5$M & $632.2$M & $1280$ & $88.6$ \\
        \bottomrule
    \end{tabular}
\end{table}

\subsubsection{Full vs.\ Split CP over the Feature Dimension}
\label{app:full-vs-split-pca}

\parbold{Setup.} We use a two-layer GCN \citep{kipf2017semi} over the $2879$-dimensional bag-of-words node features of \texttt{CoraML}, and a two-layer MLP over the $512$-dimensional embedding of a frozen \texttt{ImageNet}-pretrained ResNet-18 for \texttt{CIFAR-10}. Full CP trains the model from initialization on the training set, the calibration set, and the pseudo-labeled test point. We reduce the features by PCA to $K\in\{32,16,8\}$ for \texttt{CoraML}, and $K\in\{64,32,16\}$ for \texttt{CIFAR-10}, and sweep $n_{\mathrm{cal}}$, and $n_\mathrm{tr}$. The score is the predictive probability of \autoref{sec:background}, and $1-\alpha=0.9$. The adaptation floor $|\gC_B|$ is the size of the posterior conformal set $\gC_{\vtheta_n}(\xtest)$ i.e. split CP on the model trained on the training, and the calibration set together, which needs a single fit per calibration set, and no per-candidate refit.

\parbold{Results.} In \autoref{fig:full-vs-split-coraml}, and \autoref{fig:full-vs-split-cifar10} the plots go from the full feature space to the smallest $K$. The top row reports the average set size against $n_{\mathrm{cal}}$, where the hollow markers are the approximate full CP with $m=50$, and the bottom row reports the empirical coverage of the same sets against the nominal $1-\alpha$. Full, and split CP are at the nominal level in every panel which shows the sanity of the result. The adaptation floor is not (as expected): it under-covers by a wide margin in the full feature space ($\approx0.57$ on \texttt{CoraML} at $n_{\mathrm{cal}}=50$, and $\approx0.49$ on \texttt{CIFAR-10} at $n_{\mathrm{cal}}=100$), and it approaches $1-\alpha$ as $n_{\mathrm{cal}}$ grows, or $K$ shrinks. This is expected, since $\gC_B$ scores the calibration points it was trained on, and the amount of this optimism is again a formability -- now read at the calibration points. 

Similarly in set size, we observe that by increasing the calibration points, the difference between the two approaches decreases, until full CP crosses the split CP. Notably this effect is more visible in \texttt{CIFAR-10} than in \texttt{CoraML}. Mainly this is since the training size of the \texttt{CIFAR-10} is larger than that of \texttt{CoraML}. We also observe that reducing the feature dimension closes the performance gap.

An interesting side observation is that the question of whether the full CP is more efficient than the split CP can be answered without the need to compute the full CP. One can simply estimate it with a small sample rate ($m=50$) by computing a leave-one-out refit on the calibration set, and that point alone. This is a relatively cheap operation to check the performance gap, without the need for expensive computation. The method turns out to be a good approximation as shown in both figures (the hollow markers are the approximate full CP with $m=50$).

We report the extended experiment on the gap $|\gC^F(\xtest)|-|\gC^S(\xtest)|$ over a grid of $n_{\mathrm{train}}$, and $n_{\mathrm{cal}}$, on \texttt{CoraML} ($2879$-d), and on \texttt{CIFAR-10} (the $512$-d embedding). The results are shown in \autoref{fig:full-vs-split-train-calib}. This is also same as our expectation. Another takeaway from this result is that the training set size also decreases the performance, and as the training size increases, the effect of the calibration set size becomes less significant. 

\subsubsection{Adaptive Prediction Sets}
\label{app:train-vs-calib-aps}

In this work we focus on threshold prediction sets as the score function \citep{sadinle2019least}: it directly sets the conformity score to the predictive probability of the candidate label; i.e. $s(\vx,y)=p_\vtheta(y\mid\vx)$. That object is very easy to work with and it directly connects to the decomposition of \autoref{eq:margin-decomposition} to the updates of the predictive distribution after refits. However, conformal prediction is agnostic to the choice of score, and the same margin-decomposition can apply on other score functions as well. It is an important question to ask what interpretation does the full-split gap (and in particular the margin gap) has under that different score. 

We repeat the experiment shown in \autoref{fig:full-vs-split-cifar10}, this time under the adaptive prediction sets (APS) score \citep{romano2020classification}. Result is shown in \autoref{fig:full-vs-split-train-calib-aps}. This ablation does not need to train a new model since we have already cached the refits from the previous experiment. 

\begin{figure}[t]
    \centering
    \includegraphics{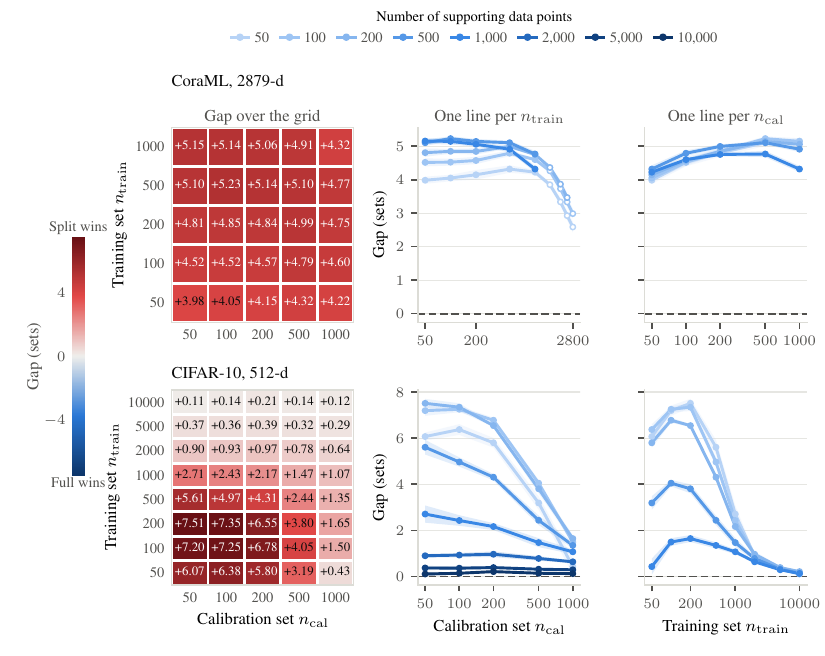}
    \caption{The grid of \autoref{fig:full-vs-split-train-calib} read under the randomized APS score \citep{romano2020classification}, off the same refits: the gap $|\gC^F|-|\gC^S|$ in average set size over the training set size $n_{\mathrm{train}}$, and the calibration set size $n_{\mathrm{cal}}$ (positive: split CP is smaller), on \texttt{CoraML} with $2879$-d features [Top], and \texttt{CIFAR-10} with the full $512$-d embedding [Bottom]. [Left] The gap over the grid. [Middle] The gap against $n_{\mathrm{cal}}$, one line per $n_{\mathrm{train}}$. Hollow markers are the \texttt{CoraML} extension arm up to $n_{\mathrm{cal}}=2800$. [Right] The gap against $n_{\mathrm{train}}$, one line per $n_{\mathrm{cal}}$. In the line panels, the shade encodes the size of the other set (light to dark = small to large), and bands are $\pm1$ standard error of the paired difference across calibration splits. The color scale is centered at zero as in \autoref{fig:full-vs-split-train-calib}, but it spans a range about two and a half times wider. No cell of either grid is negative. $1-\alpha=0.9$.}
    \label{fig:full-vs-split-train-calib-aps}
\end{figure}

APS applies the same conformal prediction as shown in \autoref{sec:background}, but instead on the following score function:
\begin{align*}
    s(\vx,y)=1-\sum_{y':\,p_\vtheta(y'\mid\vx)>p_\vtheta(y\mid\vx)}p_\vtheta(y'\mid\vx)-u\,p_\vtheta(y\mid\vx),
\end{align*}
and we use the randomized variant, with one $u\sim\mathrm{Unif}(0,1)$ drawn per (point, rank) once per split, and reused by every refit. The score therefore remains a deterministic function of the predictive distribution, and the exchangeability argument is untouched. Everything else -- the grid, the test points, the models, and $1-\alpha=0.9$ -- is as in \autoref{fig:full-vs-split-train-calib}.

\parbold{Result.} Split CP is smaller in every cell of both grids, and by a wide margin, and by increasing the training points or calibration points the gap converges to zero. APS can be read as a fractional knapsack problem over the probabilty distribution: take the smallest number of labels (all with equal weight), until $1 - \beta$ cumulative probability is captured. Then conformalization (computing the quantile) is an answer to: what cumulative $1 - \beta$ in model's predictive probability can capture the $1 - \alpha$ probability in the true distribution. The gap is therefore a measure of how much the model's predictive distribution is misaligned with the true distribution, and the gap converges to zero as the model sees more labels and becomes better aligned with the true distribution. However deriving the same formability decomposition as in \autoref{sec:theory} is not straightforward, specially since the score involves the ranking of the labels, and the refit can change this ranking. We leave this as an open question for future work.

\subsubsection{Breakdown of Uncertainty Quantification}
\label{app:uncertainty-breakdown}

In \autoref{fig:leaderboard-grouped-all} we report the detailed comparison against other uncertainty measures. Formability is close to the best methods on graphs and images, while on language softmax entropy performs better. The full--split set-size gap performs poorly across all three modalities, as it quantizes the continuous formability signal into a binary signal.

\begin{figure}[t]
  \centering
  \includegraphics{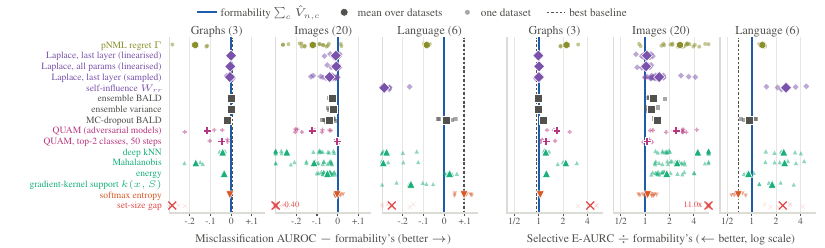}
  \caption{
    Formability against families of uncertainty {measures, for in-distribution} data, on graphs, images and language. For each modality the relative performance is gathered from different datasets and models. 
  }
  \label{fig:leaderboard-grouped-all}
\end{figure}

\autoref{fig:backbone-ladder-main} extends the image comparison of \autoref{fig:leaderboard-grouped}, and \autoref{fig:pnml-regret-main} across different backbones and distribution shifts. Formability stays close to the best uncertainty measures for ranking errors on clean images, while for shift detection the result depends on the type of shift and is weaker on Gaussian noise.

\begin{figure}[t]
  \centering
  \includegraphics[width=\linewidth]{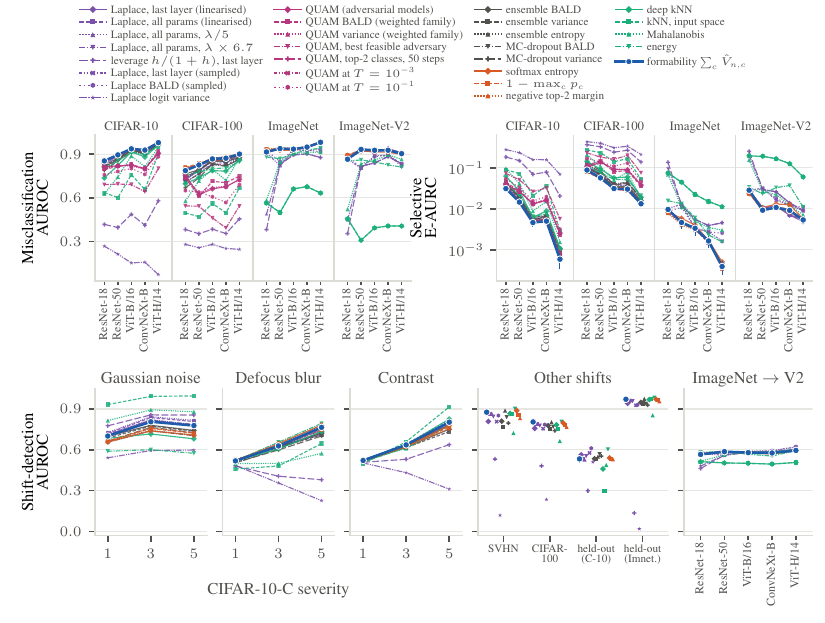}
  \caption{Formability against uncertainty measures across frozen image backbones and distribution shifts. [Top] Misclassification AUROC (higher is better) and selective E-AURC (lower is better, log scale) on clean \texttt{CIFAR-10}, \texttt{CIFAR-100}, \texttt{ImageNet}, and \texttt{ImageNet-V2}, with five backbones in each block. [Bottom] Shift-detection AUROC for \texttt{CIFAR-10-C} at three severities, Street View House Numbers (\texttt{SVHN}), \texttt{CIFAR-100}, held-out classes, and \texttt{ImageNet-V2} against \texttt{ImageNet} validation images. The blue line is $\sum_c\hat V_{n,c}$. Bars are $\pm1$ standard error over splits. Other colors distinguish Laplace, ensemble and MC-dropout, QUAM, feature-based, and softmax measures.}
  \label{fig:backbone-ladder-main}
\end{figure}

\autoref{fig:aleatoric-bins-error-ranking} further extends the image comparison of \autoref{fig:leaderboard-grouped} by grouping \texttt{CIFAR-10} images according to human-label disagreement (which can be considered a proxy for aleatoric uncertainty). Formability remains close to the best methods consistently within these groups, while ranking the errors generally becomes harder as the disagreement increases.

\begin{figure}[t]
  \centering
  \includegraphics[width=\linewidth]{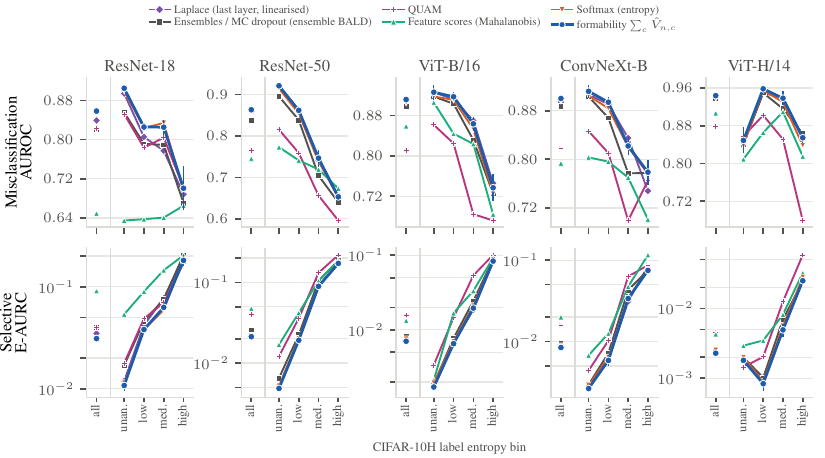}
  \caption{Error ranking within bins of human-label disagreement on \texttt{CIFAR-10H}. The $1000$ \texttt{CIFAR-10} test images are grouped by the entropy $H$ of about $51$ human labels per image: unanimous ($H=0$), low ($0<H<0.2$), medium ($0.2\le H<0.5$), and high ($H\ge0.5$). ``All'' is the pooled result. Each column is a frozen backbone. [Top] Misclassification AUROC (higher is better). [Bottom] Selective E-AURC (lower is better, log scale). Blue: $\sum_c\hat V_{n,c}$ with $\pm1$ standard error over splits. Other lines: last-layer Laplace, ensemble BALD, QUAM, Mahalanobis, and softmax entropy.}
  \label{fig:aleatoric-bins-error-ranking}
\end{figure}

\parbold{Ablation on training set size.} In addition we also stress-test the measures against the number of labels the model has seen during training.
\autoref{fig:size-ladder} shows this comparison across various training budgets.

The models are the ones of \autoref{app:full-vs-split-pca}. We vary the number of labels the model has seen, $N=n_{\mathrm{train}}+n_{\mathrm{cal}}$, from $50$ to $1600$ on \texttt{CoraML}, and from $50$ to $3000$ on \texttt{CIFAR-10} -- with the calibration set held at $n_{\mathrm{cal}}=20$. Note that here our main argument is not to show a powerful conformal prediction, but to show the quality of formability in capturing epistemic uncertainty. Therefore the small calibration size is justified for computational efficiency. Columns are the two datasets, with the accuracy of the base model at each rung along the top. [Top] The misclassification AUROC (higher is better; the dotted rule is chance). [Bottom] The selective E-AURC (lower is better). In all training budgets, formability remains among the best methods.

\begin{figure}[t]
    \centering
    \includegraphics{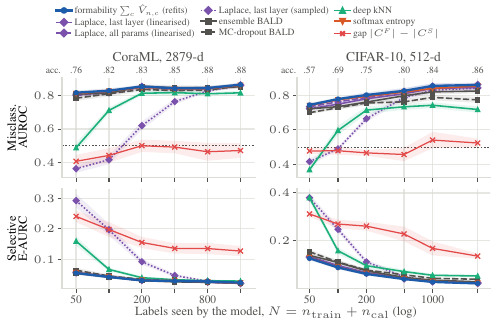}
    \caption{The epistemic measures of \autoref{sec:experiments} against the number of labels the model has seen, $N=n_{\mathrm{train}}+n_{\mathrm{cal}}$ (log), at the end of training (epoch $300$), on \texttt{CoraML} with the $2879$-d node features [Left], and \texttt{CIFAR-10} with the $512$-d embedding [Right]. The accuracy of the base model at each rung is printed along the top. [Top] Misclassification AUROC (higher is better; dotted: chance). [Bottom] Selective E-AURC (lower is better). Every measure ranks the same base model on the same $200$ evaluation points of a split, with $n_{\mathrm{cal}}=20$ held fixed. Blue circles: $\sum_c\hat V_{n,c}$ from the refits. Purple diamonds: the linearized last-layer Laplace (solid), the all-parameter empirical-Fisher form (dashed), and the sampled last-layer Laplace (dotted). Grey squares: ensemble BALD (solid), and MC-dropout BALD (dashed). Green triangles: deep kNN. Orange: softmax entropy. Red crosses: the set-size gap $|\gC^F|-|\gC^S|$. Lines are means over $8$ calibration splits, bands $\pm1$ standard error.}
    \label{fig:size-ladder}
\end{figure}

\parbold{Cost against quality.}
The refits are the most expensive reading of the epistemic uncertainty by construction: one fine-tuning per pseudo-label, per test point. We evaluate the cost, and (AUROC, E-AURC) on different measures for five datasets: \texttt{CoraML}, \texttt{CiteSeer}, and \texttt{PubMed} with the two-layer GCN of \autoref{sec:experiments}, and \texttt{CIFAR-10}, and \texttt{ImageNet-100} (its validation images) with the two-layer MLP over the $512$-d embedding of the frozen ResNet-18 (\autoref{tab:backbones}). The refits, the measures, and the two metrics are as in \autoref{sec:experiments}. Every measure is timed per evaluation point with CUDA synchronization. The fit of the base model is charged to no one, since every measure needs it, and the softmax controls are a forward pass that is already made. The refits are fitted in a batch over the candidate labels, which is how full CP is run throughout the paper. One at a time, they would cost one to two orders of magnitude more. QUAM is timed at two budgets: every class attacked for $100$ steps, and the two most probable classes for $50$. 

\autoref{fig:cost-quality} shows the result: one column per dataset. The horizontal axis is the wall-clock seconds per evaluation point (log). The vertical axis is the misclassification AUROC [Top], and the selective E-AURC [Bottom]. Markers are means over the calibration splits, and bars are $\pm1$ standard error on both axes. The two QUAM budgets are joined by a line, from the cheap one to the full one.

\begin{figure}[t]
    \centering
    \includegraphics{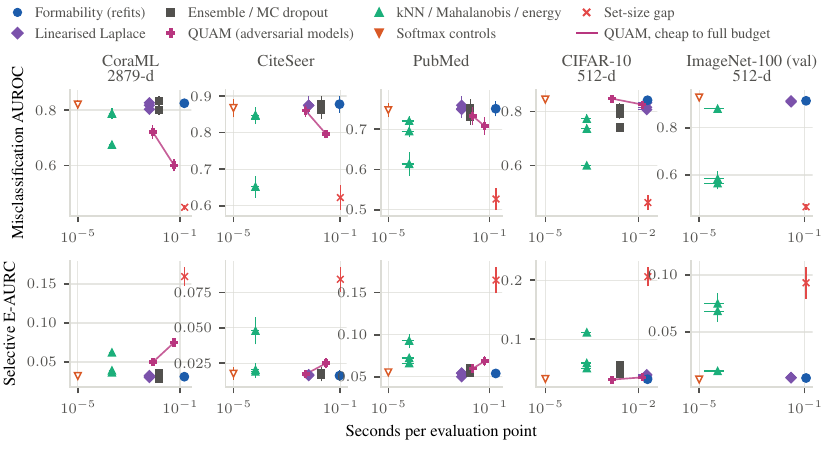}
    \caption{Cost against quality, one marker per measure, one column per dataset. [Horizontal] Wall-clock seconds per evaluation point (log), timed with CUDA synchronization. The softmax controls are a forward pass that is already made, and are drawn hollow at the left edge. [Vertical] Misclassification AUROC [Top, higher is better], and selective E-AURC [Bottom, lower is better]. Means over calibration splits, bars $\pm1$ standard error on both axes. QUAM \citep{schweighofer2023quam} is drawn at two budgets -- the two most probable classes attacked for $50$ steps, and every class for $100$ -- joined by a line from the cheap budget to the full one.}
    \label{fig:cost-quality}
\end{figure}

\subsection{Formability on Language Models}
\label{app:llm-experiments}

For language model tasks, we use four-option question answering: each model assigns probabilities to the answer letters A--D, and the conformal set contains a subset of these letters. We do not evaluate free-form generations.
Even keeping most of the parameters {frozen}, running the same pipeline as the images, and graphs can be {significantly} more expensive. Therefore we approximate formability by a local update of a LoRA adapter. In general we follow the full conformal approximation from \citet{martinez2023approximating}, but this time over the LoRA adapter parameters, {for the classification task}.

\parbold{Setup.} We use six model--dataset combinations: Llama-1B, Llama-3B{ \citep{grattafiori2024llama}}, Qwen-1.5B, and Qwen-3B \citep{qwen2025qwen25} on MMLU \citep{hendrycks2021measuring} humanities and business, and Llama-1B on MedQA \citep{jin2021disease} and MedMCQA \citep{pal2022medmcqa}. The pretrained language model is frozen, and a LoRA adapter is fitted on $2000$ labeled training questions.
The {LoRA adapter is applied} with rank $r=8$, and scaling $\alpha=16$, dropout 0, on query and value matrices of the attention layers. The same setup is also used for the ensemble model.
From separate pools of $6000$ calibration questions, and $800$ evaluation questions, we vary the calibration set size from $50$ to $2000$, with $20$ calibration draws. The score is the probability assigned to an answer letter. Every uncertainty measure ranks the same evaluation questions, using the misclassification AUROC, and selective E-AURC as mentioned in \autoref{sec:experiments}. Softmax entropy, kNN, Mahalanobis and the ensemble are not computed from $p_\vtheta$. They come from a separate cached model fitted on the training questions plus one fixed set of calibration questions. Formability is however computed from damped influence steps that start at that same anchor adapter. The comparison is not entirely fair, but it is the best we can do given the expensive computational cost of the refits on LLMs.

\parbold{Approximate refits.} Let $\tilde\vtheta_n$ denote the adapter fitted on the training questions, and updated by a single damped influence step over calibration points, and let $\tilde\vtheta_{n,c}$ denote the adapter fitted on those same questions plus $(\xtest,c)$. These are the base, and refit whose difference defines formability. The pseudo-labeled test question is the only additional observation. We approximate the two fits by damped influence steps in the adapter parameters \citep{koh2017influence,martinez2023approximating}, rather than optimizing a new adapter for every candidate. In particular, the additional step from the base has the form
\begin{equation}
    \tilde\vtheta_{n,c}-\tilde\vtheta_n\ \approx\ -\tfrac{1}{N+1}(H+\lambda\mI)^{-1}g_c,
    \label{eq:llm-one-step}
\end{equation}
where $g_c=\nabla_\vtheta\ell(\xtest,c;\vtheta_{\mathrm{ref}})$, and $\vtheta_{\mathrm{ref}}$ is the adapter at which the gradients are cached. Here $H$ is the \emph{empirical Fisher} of the existing training questions (note that a preconditioner containing the calibration questions, but not the test question would break their exchangeability). Writing $N$ for the combined size of the training, and calibration questions, and $g_i=\nabla_\vtheta\ell(\vx_i,y_i;\vtheta_{\mathrm{ref}})$, we use
\begin{equation}
    H=\frac{1}{n_{\mathrm{train}}}\sum_{i=1}^{n_{\mathrm{train}}}g_i g_i^\top.
    \label{eq:llm-empirical-fisher}
\end{equation}
Each $g_i$ uses the question's observed label $y_i$. This differs from the generalized Gauss--Newton matrix (the model Fisher for softmax cross-entropy), which averages $g_{i,c}g_{i,c}^\top$ over all four labels with weights $p(c\mid\vx_i)$.
This observed-label outer product is not, in general, a reliable approximation to the model Fisher or the Hessian \citep{kunstner2019limitations}.
Cached gradients, and their bilinear products give the changes of the candidate probability, and the calibration scores without running every refit in the main sweep. We vary the $\lambda$ {independently of} the calibration size (every calibration size is run at each $\lambda$). \autoref{fig:formability-llm} shows the results at $\lambda/\mathrm{ref}=1.5${, with $\mathrm{ref}=\mathrm{tr}(H)/n_{\mathrm{train}}$}. With $\tilde p_n$, and $\tilde p_{n,c}$ denoting the predictions of the two approximate fits, we compute $\hat w_{n,c}=\tilde p_n(c\mid\xtest)$, $\hat\Phi_{n,c}=\tilde p_{n,c}(c\mid\xtest)-\tilde p_n(c\mid\xtest)$, and $\sum_c\hat V_{n,c}=\sum_c\hat w_{n,c}\hat\Phi_{n,c}$. The same approximate refit changes the calibration scores, and hence the full-conformal threshold. This is a local approximation to adapter refitting, and does not update the frozen language-model weights.

\parbold{Result.} Similar to the image and graph experiments, the approximate formability shows a performance that is competitive with the best method and performs better than many other approaches. {However, the} performance of the method is not as strong as the softmax entropy, and in general the results are not at all comparable with the image baseline. Our intuition is that with {all the compromises we made} to make the experiments feasible (and the training to fit on a single GPU), it is very much possible that the refit approximation is not as accurate as it should be. {Finding} a more efficient way to compute the refit, or a better approximation is an interesting direction for future work. The results are shown in \autoref{fig:formability-llm}. 

\begin{figure}[H]
    \centering
    \includegraphics[width=\linewidth]{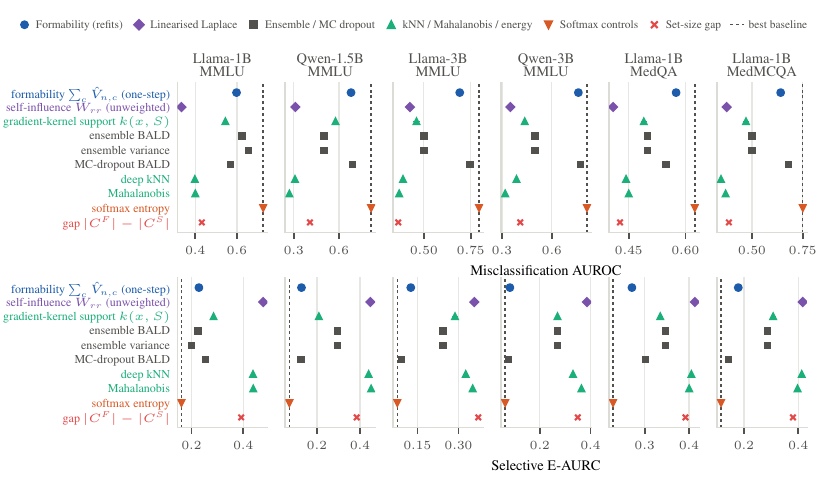}
    \caption{Formability on four-option language-model tasks. Each column ranks errors on the same evaluation questions. [Top] Misclassification AUROC (higher is better). [Bottom] Selective E-AURC (lower is better). 
    }
    \label{fig:formability-llm}
\end{figure}

\end{document}